\newtheorem{problem}{Problem}
\theoremstyle{thmstyleone}%
\newtheorem{theorem}{Theorem}
\theoremstyle{thmstyletwo}%
\theoremstyle{thmstylethree}%
\newtheorem{definition}{Definition}%
\begin{document}

\title[Automated Unsupervised Graph Anomaly Detection]{Towards Automated Self-Supervised Learning for Truly Unsupervised Graph Anomaly Detection\footnote{Manuscript accepted by \textit{Data Mining and Knowledge Discovery} journal for publication (June 2025). This is a preprint version.}}


\author*{\fnm{Zhong} \sur{Li}}\email{z.li@liacs.leidenuniv.nl}

\author{\fnm{Yuhang} \sur{Wang}}\email{yuhang.wang@umail.leidenuniv.nl}

\author{\fnm{Matthijs} \sur{van Leeuwen}}\email{
m.van.leeuwen@liacs.leidenuniv.nl}

\affil{\orgdiv{Leiden Institute of Advanced Computer Science (LIACS)}, \orgname{Leiden University}, \country{the Netherlands}}

\abstract{Self-supervised learning (SSL) is an emerging paradigm that exploits supervisory signals generated from the data itself, and many recent studies have leveraged SSL to conduct graph anomaly detection. However, we empirically found that three important factors can substantially impact detection performance across datasets: 1) the specific SSL strategy employed; 2) the tuning of the strategy's hyperparameters; and 3) the allocation of combination weights when using multiple strategies. Most SSL-based graph anomaly detection methods circumvent these issues by arbitrarily or selectively (i.e., guided by label information) choosing SSL strategies, hyperparameter settings, and combination weights. While an arbitrary choice may lead to subpar performance, using label information in an unsupervised setting is label information leakage and leads to severe overestimation of a method's performance. Leakage has been criticized as ``one of the top ten data mining mistakes", yet many recent studies on SSL-based graph anomaly detection have been using label information to select hyperparameters. To mitigate this issue, we propose to use an internal evaluation strategy (with theoretical analysis) to select  hyperparameters in SSL for unsupervised anomaly detection.  We perform extensive experiments using 10 recent SSL-based graph anomaly detection algorithms on various benchmark datasets, demonstrating both the prior issues with hyperparameter selection and  the effectiveness of our proposed strategy.}

\keywords{Graph Anomaly Detection, Self-Supervised Learning, Automated Machine Learning, Graph Neural Networks, Label Leakage}

\maketitle

\section{Introduction}

Graph anomaly detection (GAD) refers to the tasks of identifying anomalous graph objects---such as nodes, edges or sub-graphs---in an individual graph \citep{akoglu2015graph, ma2021comprehensive}, or identifying anomalous graphs from a set of graphs \citep{ma2022deep, li2024cross}. GAD has numerous successful applications, e.g., in finance fraud detection \citep{motie2023financial}, fake news detection \citep{xu2022evidence},  system fault diagnosis \citep{li2024graph}, and network intrusion detection \citep{garcia2009anomaly}. In this paper, we focus on unsupervised node anomaly detection on static attributed graphs, namely identifying which nodes in a static attributed graph are anomalous. Recently, Graph Neural Networks (GNNs) have become prevalent in detecting node anomalies in graphs and have shown promising performance \citep{kim2022graph}. Specifically, GNNs can learn an embedding for each node by considering both the node attributes and the graph topological information, enabling them to capture and exploit complex patterns for anomaly detection. 

Like with other neural networks, the high performance of GNNs is typically achieved at the cost of a substantial volume of labeled data. However, the process of labeling graphs is often a laborious and time-consuming effort, necessitating domain-specific expertise. For these reasons, GAD is preferably tackled in an unsupervised manner, without relying on any ground-truth labels. Self-supervised learning (SSL) has emerged as a promising unsupervised learning technique on graphs \citep{liu2022graph}, and recent studies have shown its usefulness for node anomaly detection \citep{fan2020anomalydae,zheng2021generative,jin2021anemone,liu2021anomaly,yuan2021higher,xu2022contrastive,liu2022dagad,chen2022gccad}. 
 

Graph SSL can be roughly divided into \textit{generative}, \textit{contrastive}, and \textit{predictive} methods \citep{wu2021self}. First, \textit{generative} methods such as DOMINANT \citep{ding2019deep}, GUIDE \citep{yuan2021higher}, and AnomalyDAE \citep{fan2020anomalydae} aim to detect graph anomalies by reconstructing (`generating') the adjacency matrix and/or the node attribute matrix. Next, \textit{contrastive} methods such as CoLA \citep{liu2021anomaly}, ANEMONE \citep{jin2021anemone}, GRADATE \citep{duan2023graph}, and Sub-CR \citep{zhang2022reconstruction} train a graph encoder to pull positive pairs closer while pushing negative pairs away in the embedding space. The nodes with relatively large contrastive loss values are deemed anomalies. Finally, \textit{predictive} methods such as SL-GAD \citep{zheng2021generative} try to predict node properties using its local context (e.g., a subgraph), and nodes with large prediction errors are considered anomalies. 

Contrastive learning is arguably the most successful SSL strategy for graphs \citep{xie2022self}. Most contrastive graph learning methods consist of two main modules: 1) a \textit{data augmentation module} that generates augmented data by operations such as edge dropping, node attribute masking, node addition, subgraph sampling, and/or graph diffusion. The augmented view of an instance is generally regarded as a positive pair with the original instance; and 2) a \textit{contrastive learning module} that contrasts positive pairs (and often involves negative pairs) at different levels, such as node-node contrast, node-subgraph contrast, and subgraph-subgraph contrast. 

Although SSL-based graph anomaly detection has been successful, using it in practice is often not straightforward. The most important reason for this is that most methods require a large number of choices to be made, leading to three challenges: 
\begin{itemize}
    \item[\textbf{C1.}] How should we select appropriate data augmentation functions?
    \item[\textbf{C2.}] How should we choose appropriate values for hyperparameters (HPs) of a given augmentation function? (e.g., subgraph size in a subgraph sampling function, or the proportion of edges to drop in an edge dropping function)
    \item[\textbf{C3.}] How to combine the contrast losses at different levels? (i.e., how to set their combination weights?)
\end{itemize}
Further, a recent study \citep{zheng2021generative} shows that combining multiple SSL strategies for GAD can achieve better performance than using a single SSL strategy. This leads to the fourth challenge: 
\begin{itemize}
    \item[\textbf{C4.}]How should we combine different SSL strategies?(i.e., how to set the combination weights of different SSL loss functions?)
\end{itemize}

Previous work \citep{chen2020simple,you2020graph, yoo2023dsv} showed that the choice of SSL strategies and hyperparameter values can strongly impact performance. In a \emph{supervised} setting, these choices can be systematically and rigorously made by using separate labeled data for validation. In an \emph{unsupervised setting} such as anomaly detection, however, one should assume that no labels are available even for hyperparameter tuning. In our extensive literature study, we found that existing SSL-based GAD methods typically either 1) arbitrarily choose settings or 2) do use labeled data, corroborating the findings in \cite{yoo2023dsv}. 

In the former case, practitioners typically heuristically select an augmentation function (C1) and fix its associated HPs (C2) across all datasets, and set the combination weights all equal to 1 or other fixed values (for C3 and C4). Although this approach is not flawed, it is likely to result in suboptimal detection performance: graphs from different domains usually have different properties \citep{zhao2022autogda}, implying that the optimal SSL strategy is in general data-dependent \citep{chen2020simple,you2020graph}. Therefore, utilizing a unified and pre-fixed combination weights and/or  HPs in SSL strategies for all graphs can result in sub-optimal performance.

In the latter case, practitioners pick the optimal combination weights and other hyperparameter values following a `hyperparameters sensitivity analysis' using labeled data. By using ground-truth labels on test data to check model performance with different hyperparameter values and using that to select the best model, however, \emph{label leakage} occurs. That is, information about the target of a data mining problem is used for learning/selecting model, while this information should not be legitimately accessible for learning purposes \citep{nisbet2009handbook,kaufman2012leakage}. Specifically, label information should never be used (whether implicitly or explicitly) in an unsupervised learning scenario. As shown in Figure~\ref{Fig:ROC_Var}, label leakage leads to huge overestimation of the model's performance, which is also corroborated in \cite{liu2022bond} by comparing the max and average performance with different hyperparameter configurations (cf. Appendix~\ref{app:SimilarObservations} for more details). 

\begin{figure}
\centering
\includegraphics[width=13cm]{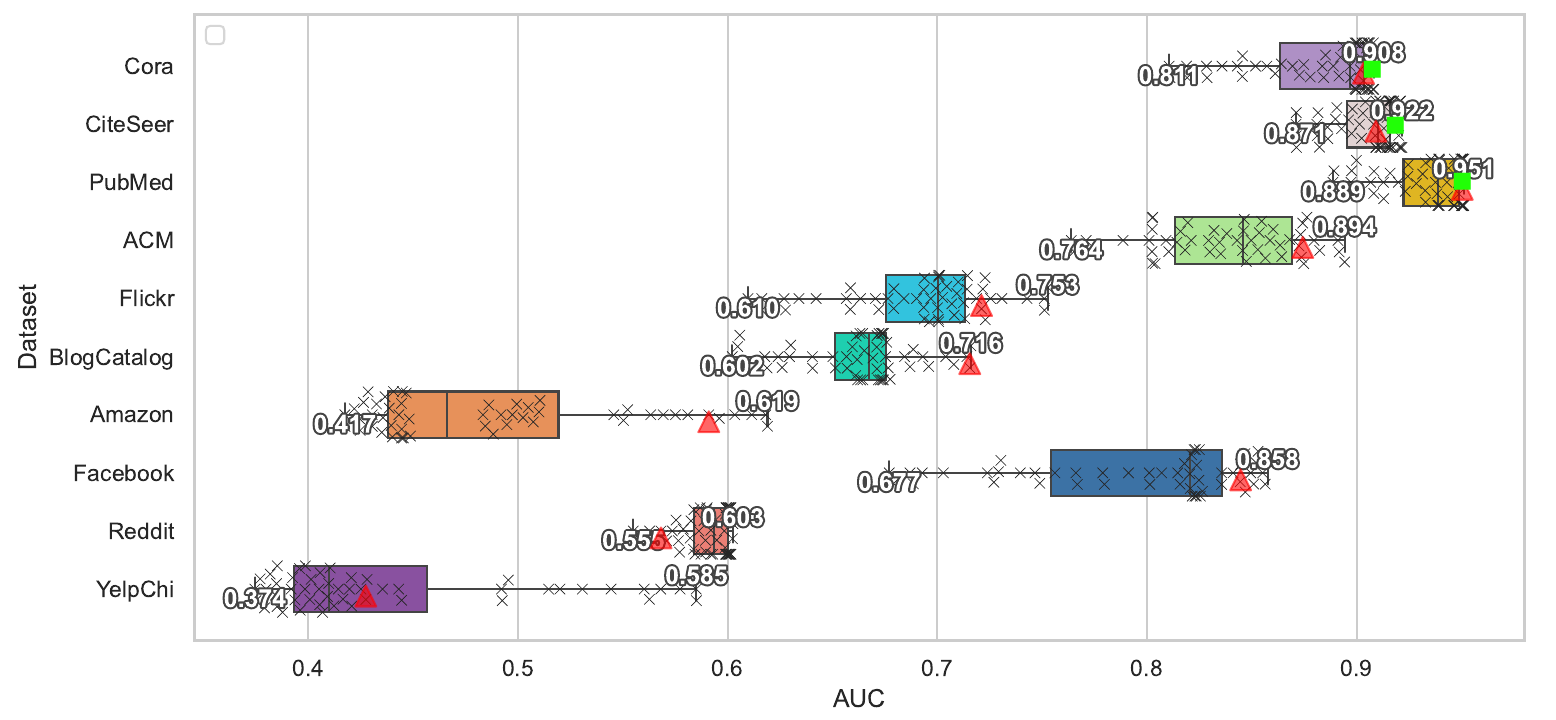}
\caption{Large performance variations (here measured by AUC) over different hyperparameter configurations for ANEMONE~\citep{jin2021anemone} on various benchmark datasets. Using labeled data and only reporting the best possible performance leads to severe overestimation of model performance. For instance, the green squares on Cora, CiteSeer, and PubMed are reported by \cite{jin2021anemone} (the other datasets were not used). Similar results are observed for other algorithms (see Appendix~\ref{appendix:FurtherAnalysis} for details). The red triangles represent the results obtained by our internal evaluation strategy, showing its potential for automating truly unsupervised anomaly detection.}
\label{Fig:ROC_Var}
\end{figure}

The reason that hyperparameter values are often chosen either arbitrarily or using label information is probably that it is challenging to construct an internal evaluation strategy for anomaly detection without using any labels.
There have been some research efforts aimed at automating graph SSL though. For instance, JOAO \citep{you2021graph} aims to automatically combine several predefined graph augmentations via learning a sampling distribution, where the augmentations themselves are not learnable. Meanwhile, AD-GCL \citep{suresh2021adversarial} uses  learnable edge dropping augmentation and AutoGCL \citep{yin2022autogcl} proposes a learnable graph view generator that learns a probability distribution over node-level augmentations, which can well preserve the semantic labels of graphs for graph-level tasks. However, all these automated graph augmentation methods are agnostic to the downstream tasks, making the learned graph embeddings sub-optimal for a specific downstream task, namely anomaly detection in our case. Additionally, these methods are specifically designed for certain SSL frameworks, and it is non-trivial (if at all possible) to extend them to the general SSL framework. Moreover, these automated SSL strategies are computationally expensive, rendering them impractical in real-world applications.

As an initial step towards mitigating this long-standing but neglected issue, we propose a lightweight and plug-and-play approach dubbed \textit{AutoGAD}, to automate SSL for truly unsupervised graph anomaly detection. Specifically, \textit{AutoGAD} leverages a so-called internal evaluation strategy \citep{ma2023need}, without relying on any ground-truth labels (whether explicitly or implicitly), to select optimal combination weights and/or SSL-specific hyperparameter values. Moreover, we theoretically analyze the internal evaluation strategy to prove why it is effective and empirically demonstrate this. 

Overall, our main contributions can be summarized as follows:
\begin{itemize}
    \item We raise renewed awareness to the \textit{label information leakage} issue, which is critical but often overlooked in the unsupervised GAD field;
    \item Although there exists a plethora of graph SSL methods and GAD approaches, we are the first to investigate automated SSL specifically for unsupervised GAD;
    \item We propose a lightweight, plug-and-play approach to automate SSL for truly unsupervised GAD and provide a theoretical analysis;
    \item Extensive experiments are conducted using 10 state-of-the-art SSL-based GAD algorithms on {\color{black} 10} benchmark datasets, demonstrating the effectiveness of our approach.
\end{itemize}

\section{Related Work}
Our work is related to node anomaly detection on static attributed graphs, self-supervised learning for graph anomaly detection, automated self-supervised learning, and automated anomaly detection.

\subsection{Anomaly Detection on Attributed Graphs}
Early methods for node anomaly detection in static attributed graphs, such as AMEN \citep{perozzi2016scalable}, Radar \citep{li2017radar}, and Anomalous \citep{peng2018anomalous}, are not based on deep learning. These methods work well on low-dimensional attributed graphs, but their performance is limited on complex graphs with high-dimensional node attributes.

Recently, deep learning-based  methods, including DOMINANT \citep{ding2019deep}, AnomalyDAE \citep{fan2020anomalydae}, and GUIDE \citep{yuan2021higher}, have been proposed for GAD. These methods usually employ graph autoencoders to encode nodes followed by decoders to reconstruct the adjacency matrix and/or node attributes. As a result, nodes with large reconstruction errors are considered anomalies. 
Despite their superior performance to non-deep learning methods, these reconstruction-based methods still suffer from sub-optimal performance, as reconstruction is a generic unsupervised learning objective. Besides, these methods require the full attribute and adjacency matrices as model input, making them unsuitable or even impossible for large graphs.

\subsection{Self-Supervised Learning for Graph Anomaly Detection}
Graph SSL aims to learn a model by using supervision signals generated from the graph itself, without relying on human-annotated labels \citep{liu2022graph}. It has achieved promising performance on typical graph mining tasks such as representation learning \citep{jiao2020sub} and graph classification \citep{zeng2021contrastive}.  \cite{liu2021anomaly} first applied SSL to the GAD problem. Their proposed method CoLA performs single scale comparison (node-subgraph) for anomaly detection. However, ANEMONE \citep{jin2021anemone} argues that modeling the relationships in a single contrastive perspective leads to limited capability of capturing complex anomalous patterns. Hence, they propose additional node-node contrast.
Additionally, GRADATE \citep{duan2023graph} and M-MAG \citep{liu2023revisiting} combines various multi-contrast objectives, namely node-node, node-subgraph, and subgraph-subgraph contrasts for node anomaly detection. To achieve better performance, SL-GAD \citep{zheng2021generative} combines multi-view contrastive learning and generative attribute regression, while Sub-CR \citep{zhang2022reconstruction} combines multi-view contrastive learning and graph autoencoder. Finally, CONAD \citep{xu2022contrastive} considers both contrastive learning and generative reconstruction for better node anomaly detection.

\subsection{Automated Self-Supervised Learning}

Seminal work on \emph{automated data augmentation for images} \citep{ratner2017learning,cubuk2018autoaugment} was followed by work improving \citep{cubuk2018autoaugment} via faster searching mechanisms \citep{ho2019population,lim2019fast, cubuk2020randaugment} or advanced optimization methods \citep{hataya2020faster, li2020differentiable, zhang2019adversarial}.

In the context of \emph{automated data augmentation for graphs}, related work exists on graph representation learning \citep{hassani2022learning,suresh2021adversarial, jin2021automated, xie2022self,yin2022autogcl,you2021graph}, node classification \citep{zhao2021data,sun2021automated}, and graph-level classification \citep{luo2022automated,yue2022label,yin2022autogcl}. For example, JOAO \citep{you2021graph} learns the sampling distribution of a set of predefined graph augmentations. AD-GCL \citep{suresh2021adversarial} designs a learnable edge dropping augmentation and employs adversarial training strategy, and AutoGCL \citep{yin2022autogcl} proposes a learnable graph view generator that learns a probability distribution over the node-level augmentations. Further, \cite{luo2022automated} augment graph data samples, while \cite{yue2022label} perturb the representation vector. However, these methods focus on other typical graph learning tasks and it is unclear how to use them for unsupervised GAD.

\subsection{Automated Anomaly Detection}

Recent studies \citep{zhao2021automatic,bahri2022automl,ding2022hyperparameter, zhao2022towards} pointed out that unsupervised anomaly detection methods tend to be highly sensitive to the values of their hyperparameters (HPs). For example, \cite{zhao2021automatic} show that a 10x performance difference is observed for LOF \citep{breunig2000lof} by changing the number of nearest neighbors. Even more, \cite{ding2022hyperparameter} indicate that deep anomaly detection methods suffer more from such HP sensitivity issues. Concretely, \cite{zhao2022towards} demonstrate that RAE \citep{zhou2017anomaly} exhibits a 37x performance difference with different HPs configurations. 

To tackle this issue, automated HP tuning and model selection for unsupervised anomaly detection has received increasing but insufficient attention;  \cite{bahri2022automl} present an overview. Inspired by \cite{bahri2022automl,zhao2022towards}, we subdivide existing approaches into two main categories: 
\begin{itemize}
    \item \emph{Supervised evaluation} methods which require ground-truth labels although anomaly detection algorithms are unsupervised. Methods include PyODDS \citep{li2020pyodds}, TODS \citep{lai2021tods}, AutoOD \citep{li2021autood}, and AutoAD \citep{li2021automated};
    \item \emph{Unsupervised evaluation} methods which do not require ground-truth labels. They include 
    \begin{itemize}
        \item randomly selecting an HP configuration;
        \item selecting an HP configuration via an internal evaluation strategy \citep{goix2016evaluate, zhao2019lscp, marques2020internal, putina2022autoad};
        \item averaging the outputs of a set of randomly selected HP configurations \citep{wenzel2020hyperparameter};
        \item meta-learning based methods \citep{zhao2020automating, zha2020meta, zhao2022towards}.
    \end{itemize}
\end{itemize}
However, existing automated anomaly detection methods are primarily designed for non-graph data.

\section{Problem Statement}
We utilize lowercase letters, bold lowercase letters, uppercase letters, and calligraphic fonts to represent scalars ($x$), vectors ($\mathbf{x}$), matrices ($\mathbf{X}$), and sets ($\mathcal{X}$), respectively. 

\begin{definition}[Attributed Graph] We denote an attributed graph as $\mathcal{G} = \{\mathcal{V}, \mathcal{E}, \mathbf{X}\}$, where $\mathcal{V} = \{v_{1},...,v_{n}\}$ is the set of nodes. Besides, $\mathcal{E} = \{e_{ij}\}_{i,j \in\{1,...,n\}}$ is the set of edges, where $e_{ij}=1$ if there exists an edge between $v_{i}$ and $v_{j}$ and $e_{ij}=0$ otherwise. Moreover, $\mathbf{X} \in \mathbb{R}^{n\times d}$ represents the node attribute matrix, where the $i$-th row vector $\mathbf{x}_{i}$ means the node attribute of $v_{i}$.
\end{definition}

Formally, we consider unsupervised node anomaly detection on attributed graphs (dubbed GAD hereafter), which is defined as follows:

\begin{problem}[Node Anomaly Detection on Attributed Graph] Given an attributed graph as $\mathcal{G} = \{\mathcal{V}, \mathcal{E}, \mathbf{X}\}$, we aim to learn an anomaly scoring function $f(\cdot)$ that assigns an anomaly score $s = f(v_{i})$ to each node $v_{i}$, with a higher score representing a higher degree of being anomalous. Next, the anomaly scores are used to rank the nodes such that the top-$k$ nodes can be considered as anomalies. 
\end{problem}

 In this paper, we consider the \textit{transductive unsupervised anomaly detection} setting:  the graph containing both normal and abnormal nodes is given at the training stage. Node labels are not accessible during the training stage and they are only used for performance evaluation. Importantly, the labels of nodes are \textbf{not} (and should not be) used for HP tuning under this unsupervised setting.
 
Formally, we consider the hyperparameter optimization problem for unsupervised graph anomaly detection (dubbed HPO for GAD):
\begin{problem}[HPO for GAD]
Given a graph  $\mathcal{G}$ without labels and a graph anomaly detection algorithm $f(\cdot)$ with hyperparameter space $\mathbf{\Lambda}$, we aim to identify a hyperparameter configuration  $\boldsymbol{\lambda} \in \mathbf{\Lambda}$ such that the resulting model $f({\boldsymbol{\lambda}})$ can achieve the best performance on $\mathcal{G}$. I.e., suppose $\boldsymbol{\lambda}$ consists of $K$ different hyperparameters $\{\mathbf{\lambda}_{1},...,\mathbf{\lambda}_{k},...,\lambda_{K}\}$, where  $\mathbf{\lambda}_{k} \in \mathbf{\Lambda}_{k}$ can be discrete or continuous, we then aim to find
\begin{equation}
\label{Equ:HPO}
  \underset{\mathbf{\lambda}_{1} \in \mathbf{\Lambda}_{1},...,\mathbf{\lambda}_{k} \in \mathbf{\Lambda}_{k},...,\mathbf{\lambda}_{K} \in \mathbf{\Lambda}_{K}}{\arg\max} \text{Metric}\left[f(\mathbf{\lambda}_{1},...,\mathbf{\lambda}_{k},...,\mathbf{\lambda}_{K};\mathcal{G})\right],  
\end{equation}
where $Metric[\cdot]$ is a given performance metric.
\end{problem}

\section{SSL for Unsupervised GAD}
In this section, we first revisit existing self-supervised learning methods for ``unsupervised" graph anomaly detection, followed by an analysis and experiments to showcase pitfalls in existing studies.

\subsection{Existing SSL for ``Unsupervised" GAD}

\begin{figure}[]
\centering
\includegraphics[width=13cm]{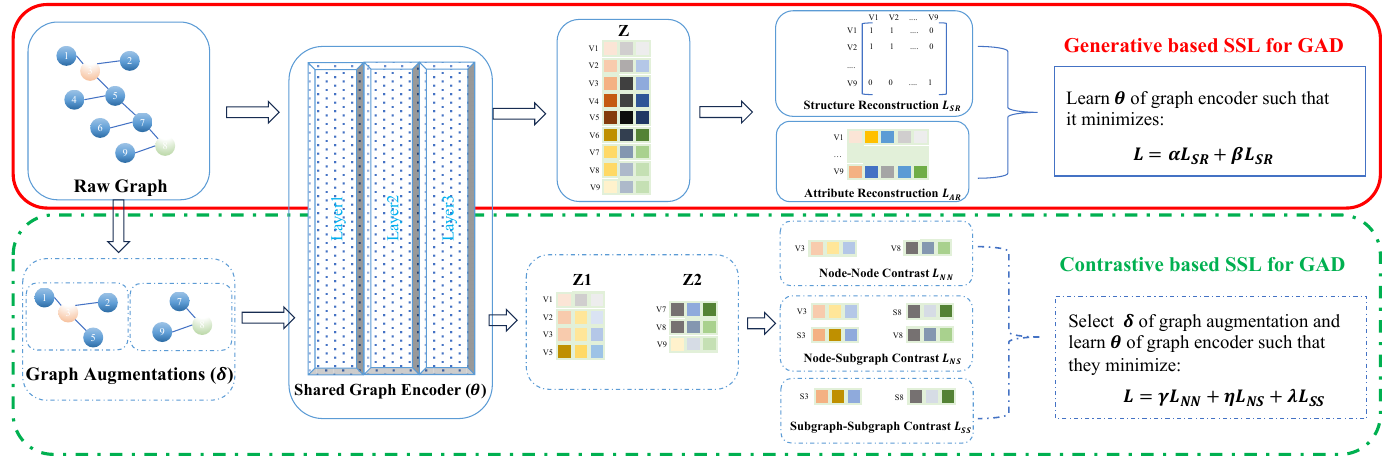}
\caption{Self-supervised learning based graph anomaly detection methods can be subdivided into \textit{generative based} methods and \textit{contrastive based} methods. A \textit{generative based} method generally involves \textit{graph structure reconstruction} and \textit{node attributes reconstruction}. A \textit{contrastive based} method usually consists of a \textit{graph augmentation module} and a \textit{contrastive learning module}.}
\label{Fig:AutoGAD_WF}
\end{figure}

Figure~\ref{Fig:AutoGAD_WF} shows how existing SSL based GAD methods can be divided into \textit{generative} methods and \textit{contrastive} methods. 

That is, a \textit{generative} method usually consists of two individual SSL tasks, namely 1.1) \textit{structure reconstruction} that aims to reconstruct the adjacency matrix, and 1.2) \textit{attribute reconstruction} that aims to reconstruct the node attribute matrix. On this basis, the attribute reconstruction error and the structure reconstruction error are combined to obtain an anomaly score, where higher reconstruction error indicates a higher degree of anomalousness.

Meanwhile, a \textit{contrastive} method often consists of two modules: 2.1) \textit{data augmentation} module, and 2.2) \textit{contrastive learning} module. First, for each target node, the \textit{data augmentation} module utilizes one augmentation function $f(\boldsymbol{\delta})$ to produce augmented samples, which usually include positive samples and negative samples. The scenario of using multiple augmentation functions can be obtained in a similar way. Second, three contrastive perspectives can be applied to contrast positive pairs and negative pairs: 2.2.1) \textit{node-node contrast} that contrasts node embedding with node embedding, and  2.2.2) \textit{node-subgraph contrast} that contrasts node embedding with subgraph embedding, and  2.2.3) \textit{subgraph-subgraph contrast} that contrasts subgraph embedding with subgraph embedding.

\subsection{Pitfalls in Existing Methods}
\label{subsec:pitfalls}
In this subsection, we revisit existing SSL-based unsupervised GAD methods by checking the following three aspects for each method:
\begin{itemize}
    \item Which SSL framework does the method employ: \textit{generative}, \textit{contrastive}, or both?
    \item How many SSL-specific hyperparameters are involved? (E.g., combination weights and others.)
    \item How are values for key SSL hyperparameters chosen? (E.g., the ratio of node attribute masking or dropping edges, and the combination weights of multiple loss functions?)
\end{itemize}

By doing so, we point out that these studies have noticeable pitfalls. More importantly, we perform experiments to show that the high performance that these methods claim to achieve is often strongly overestimated due to label leakage issues (cf. Table~\ref{tab:performance_variation}). 

Due to space constraints and to enhance readability, we revisit three representative SSL-based GAD algorithms in the main paper, including a contrastive method: ANEMONE \citep{jin2021anemone}, a generative method: AnomalyDAE \citep{fan2020anomalydae}, and a combined contrastive and generative method: SL-GAD \citep{zheng2021generative}.

\subsubsection{Revisiting ANEMONE} ANEMONE \citep{jin2021anemone} is a \textit{contrastive} method for unsupervised GAD. 

\textbf{Graph Augmentation Module.} A single graph augmentation operation is used, namely Random Ego-Nets generation with a fixed size $K$. Specifically, taking the target node as the center, they employ RWR \citep{tong2006fast} to generate two different subgraphs as ego-nets with a fixed size $K$. This results in one critical HP, namely $K$.

\textbf{Contrast Learning Module.} Two contrast perspectives are considered: 1) node-node contrast between the embedding of a masked target node within the ego-net and the embedding of the original node, leading to loss term $\mathcal{L}_{NN}$, and 2) node-subgraph contrast within each view, leading to loss term $\mathcal{L}_{NS}$. These loss terms are combined as $\mathcal{L} = (1-\alpha)\mathcal{L}_{NN}+\alpha\mathcal{L}_{NS}$, where $\alpha \in [0,1]$ is the trade-off HP, giving one more critical HP, namely $\alpha$.

\textbf{HPs Sensitivity \& Tuning.} By using ground-truth label information, they heuristically set $\alpha$ to $0.8, 0.6, 0.8$ on Cora, CiterSeer, and PubMed respectively, and report the corresponding results. The setting of $K$ is not studied, and is set to 4 for all datasets.

\subsubsection{Revisiting AnomalyDAE } AnomalyDAE \citep{fan2020anomalydae} is a \textit{generative} method using autoencoders (based on GNNs) for unsupervised GAD.

\textbf{Generative Framework.} AnomalyDAE consists of two components: 1) an attribute autoencoder to reconstruct the node attributes, where the encoder consists of two non-linear feature transform
layers and the decoder is simply a dot product operation. This leads to the loss term $\mathcal{L}_{A}$, and $\mathcal{L}_{A}$ is associated with a penalty HP $\eta$; and 2) a structure autoencoder to reconstruct the structure, where the encoder is based on GAT \citep{velivckovic2017graph} and the decoder is a dot product operation followed by a \textit{sigmoid} function. This leads to the loss term $\mathcal{L}_{S}$, and $\mathcal{L}_{S}$ is associated with a penalty HP $\theta$.

Their overall optimization objective is then defined as $\mathcal{L} = \alpha\mathcal{L}_{S}+(1-\alpha)\mathcal{L}_{A},$ where $\alpha \in (0,1)$ balances the two objectives.

\textbf{HPs Sensitivity \& Tuning.} The paper finds that the AUC usually increases first and then decreases with the increase of $\alpha$. However, the specific value of $\alpha$ on each dataset is selected using label information. The HPs $(\alpha, \eta, \theta)$ are heuristically set as $(0.7,5,40)$, $(0.9,8,90)$, $(0.7,8,10)$ on BlogCatalog, Flickr, and ACM respectively.

\subsubsection{Revisiting SL-GAD} SL-GAD \citep{zheng2021generative} is an unsupervised GAD method that combines both contrastive and generative objectives. 

\textbf{Contrastive Framework---Data Augmentation Module.}
The method uses a single graph augmentation operation, namely Random Ego-Nets generation with a fixed size $K$. Specifically, taking the target node as the center, RWR \citep{tong2006fast} is used to generate two different subgraphs as ego-nets with a fixed size $K$, where $K$ controls the radius of the surrounding contexts. This gives one critical HP for graph augmentation, namely $K$. 

\textbf{Contrastive Framework---Contrast Learning Module.} The Multi-View Contrastive Learning module compares the similarity between a node embedding and the embedding of sampled sub-graphs in augmented views (namely node-subgraph contrast), leading to loss terms $\mathcal{L}_{con,1}$ and $\mathcal{L}_{con,2}$. Combining those leads to contrastive objective $\mathcal{L}_{con} = \frac{1}{2}(\mathcal{L}_{con,1}+\mathcal{L}_{con,2})$.

\textbf{Generative Framework.} The Generative Attribute Regression module reconstructs node attributes, with the aim to achieve node-level discrimination. Specifically, they minimize the Mean Square Error between the target node's original and reconstructed attributes in augmented views, leading to loss terms $\mathcal{L}_{gen,1}$ and $\mathcal{L}_{gen,2}$. Combining those with equal weights leads to generative objective $\mathcal{L}_{gen} = \frac{1}{2}(\mathcal{L}_{gen,1}+\mathcal{L}_{gen,2})$.

The overall optimization objective is then defined as $\mathcal{L} = \alpha\mathcal{L}_{con}+\beta\mathcal{L}_{gen},$
where $\alpha,\beta \in (0,1]$ are trade-off HPs to balance the importance of the two SSL objectives.

\textbf{HPs Sensitivity \& Tuning.} The authors conducted a sensitive analysis and found that: 1) the performance first increases and then decreases with the increase of $K$. For efficiency considerations, they heuristically set the sampled subgraph size $K=4$ for all datasets; 2) they heuristically fix $\alpha = 1$ for all datasets as they found that this achieves good performance on most datasets (with the help of label information); and 3) the selection of $\beta$ is highly dependent on the specific dataset. Hence, they ``fine-tune" the value of $\beta$ for each dataset via selecting $\beta$ from $\{0.2, 0.4, 0.6, 0.8, 1.0\}$ using labels.

\subsubsection{Other SSL-based GAD methods} Due to space constraints, the analyses of other SSL-based GAD methods, including CoLA \citep{liu2021anomaly} , GRADATE \citep{duan2023graph}, Sub-CR \citep{zhang2022reconstruction}, CONAD \citep{xu2022contrastive}, DOMINANT \citep{ding2019deep}, GUIDE \citep{yuan2021higher}, and GAAN \citep{chen2020generative}, are given in Appendix~\ref{appendix:pitfalls}. These methods are all representatives of recent advancements in using SSL to conduct unsupervised graph anomaly detection, and have yielded outstanding detection performance. Likewise, however, these methods also exhibit pitfalls with regard to hyperparameter tuning, similar to those of ANEMONE \citep{jin2021anemone}, AnomalyDAE \citep{fan2020anomalydae}, and SL-GAD \citep{zheng2021generative}.

\begin{table*}[!htbp]
\centering
    \caption{Performance variation, quantified as $\frac{\max(\text{AUC})-\min(\text{AUC})}{\max(\text{AUC})}$ across different hyperparameter settings on ten benchmark datasets. Results are averaged over five independent runs, each initialized with a unique random seed. `OOM' indicates out-of-memory errors, while `OOR' signifies that runtime exceeded the 7-day limit for a single trial. {\color{black} Cells marked as `UNF' denote persistent underfitting of algorithms, even after reaching the maximum allowed training epochs (e.g., loss values change by less than $10^{-2}$ after 400 epochs). `NAN' indicates execution errors caused by excessive NaN values; these cases are excluded from further analysis. Refer to Section~\ref{Sec:Experiments} for details on the experimental setup.}
}
\resizebox{\textwidth}{!}{
    \begin{tabular}{p{2cm}p{0.8cm}p{1cm}p{1cm}p{0.8cm}p{0.8cm}p{1.7cm}p{1cm}p{1.2cm}p{1.2cm}p{1.2cm}|p{1cm}}
        \toprule
         & Cora & CiteSeer & PubMed & ACM & Flickr & BlogCatalog & Amazon & Facebook & Reddit & YelpChi &\textbf{Average} \\
        \midrule
        CoLA & 1.1\% & 1.7\% & 1.6\% & 4.2\% & 3.3\% & 4.7\%  &18.0\% & 3.4\% & 2.9\% & 31.9\% &7.3\% \\
        ANEMONE & 8.9\% & 6.6\% & 6.3\% & 11.3\% & 16.9\% & 16.8\% & 32.7\% & 23.9\% & 8.9\% & 37.7\% & 17.0\%  \\
        GRADATE & 6.9\% & 14.1\% & \cellcolor{black!25} OOM & \cellcolor{black!25} OOM & \cellcolor{black!25} OOR & \cellcolor{black!25} OOR & 5.9\% & 22.9\% & \cellcolor{black!25} OOR & \cellcolor{black!25} OOR &12.5\% \\
        SL-GAD & 17.4\% & 16.2\% & 19.5\% & 17.7\% & 16.3\% & 23.4\% & 25.4\% & 47.8\% & 21.8\% & \cellcolor{black!25}OOR & 22.8\% \\
        Sub-CR & 15.1\% & 8.3\% & \cellcolor{black!25} OOM & \cellcolor{black!25} OOM & 9.8\% & 6.3\% & 28.6\% & 20.3\% & \cellcolor{black!25} OOM & \cellcolor{black!25} OOM & 14.7\% \\
        CONAD & 5.8\% & 7.0\% & 2.3\% & \cellcolor{black!25}UNF & \cellcolor{black!25} OOM & \cellcolor{black!25} OOM & 17.3\% &27.3\% & 9.7\% & 40.7\% & 15.7\% \\
        DOMINANT & 5.1\% & 6.0\% & 1.9\% & \cellcolor{black!25}UNF & \cellcolor{black!25}UNF & \cellcolor{black!25}UNF &12.4\% &19.1\% &8.4\% &34.5\% & 12.5\% \\
        A-DAE & 19.1\% & 25.3\% & 23.8\% & 20.8\% & 23.6\% & 14.3\% & 48.1\% & 64.9\% & \cellcolor{black!25} NAN & \cellcolor{black!25} NAN & 30.0\% \\
        GUIDE & 4.8\% & 4.8\% & 1.8\% & \cellcolor{black!25}UNF & 8.5\% & \cellcolor{black!25}UNF & 11.5\% & 18.6\% & 8.0\% & 34.4\% & 11.6\% \\
        GAAN & 28.1\% & 30.0\% & 30.3\% & 25.6\% & 10.1\% & 7.2\% & 13.1\% & 72.6\% & 11.9\% & 11.5\% & 24.0\% \\
        \bottomrule
    \end{tabular}
    }
    \label{tab:performance_variation}
\end{table*}
\subsection{Sensitivity Analysis}
\label{subsec:SystematicEvaluations}
After revisiting recent SSL-based unsupervised GAD methods, we now empirically investigate their sensitivity to SSL-related HPs in a systematic way. More concretely, we report their performance variations in terms of ROC-AUC values under different hyperparameter configurations (see Section~\ref{Sec:Experiments} for experiment settings). 

As shown in Figure~\ref{Fig:ROC_Var}, for a typical run with different hyperparameter configurations, the performance of ANEMONE \citep{jin2021anemone} can vary strongly on each of the {\color{black}ten} datasets. Other SSL-based GAD algorithms exhibit similar behavior; extensive results and analysis are deferred to Appendix~\ref{appendix:FurtherAnalysis} for space reasons.

For an in-depth yet compact analysis, Table~\ref{tab:performance_variation} presents average results over five independent runs when varying SSL-related hyperparameter values. {\color{black}Specifically, CoLA \citep{liu2021anomaly}, GUIDE \citep{yuan2021higher}, DOMINANT \citep{ding2019deep}, GRADATE \citep{duan2023graph}, and Sub-CR \citep{zhang2022reconstruction}  demonstrate moderate performance variations (namely between $7.3\%$ and $14.7\%$ on average). Meanwhile, CONAD \citep{xu2022contrastive}, ANEMONE \citep{jin2021anemone}, SL-GAD \citep{zheng2021generative}, GAAN \citep{chen2020generative}, and AnomalyDAE \citep{fan2020anomalydae} suffer from large performance variations (namely ranging from $15.7\%$ to $30.0\%$ on average)}. From Subsection \ref{subsec:pitfalls} and Appendix~\ref{appendix:pitfalls}, we see that the results reported in existing papers are often obtained by manually tuned HPs  (in a post-hoc way with label information), thereby leading to strongly overestimated performance for real-world applications where labels are not accessible. To mitigate this severe issue, we propose \textit{AutoGAD}, a method for automating hyperparameter selection in SSL for GAD and achieving truly unsupervised graph anomaly detection. Importantly, \textit{AutoGAD} does not need any ground-truth labels.

\section{AutoGAD: Using Internal Evaluation to Automate SSL for GAD}
Our proposed approach, called AutoGAD, consists of two parts: 1) an unsupervised performance metric, and 2) an effective search method. Importantly, and as mentioned before, the chosen performance metric---denoted $Metric[\cdot]$ in Equation~\ref{Equ:HPO}---should not use any ground-truth label information, simply because this is not available in a truly unsupervised setting. We therefore propose to utilize an internal evaluation strategy, which will be elucidated later. Next, given the impracticality of evaluating an infinite number of configurations for continuous hyperparameter domains, another challenge is the efficient exploration of the search space. Section \ref{subsec:Search} describes a straightforward approach using discretization and grid search that works well in practice, as shown in the next section.

\subsection{Internal Evaluation Strategy}\label{subsec:internal}


The intuition behind the internal evaluation strategy that we use is to measure the similarity of anomaly scores within the same predicted anomaly class and the dissimilarity between anomaly scores across different predicted classes (i.e., `anomaly' or `no anomaly'). As we will prove later, optimizing the resulting measure is equivalent to simultaneously minimizing the false positive rate and the false negative rate. In this way, we aim to evaluate and optimize the performance of the anomaly detector under different SSL configurations \emph{without having to rely on any ground-truth labels}. 

\subsubsection{Contrast Score Margin}
The metric that we use is Contrast Score Margin \citep{xu2019automatic}, which was introduced before but not for graph anomaly detection, and is defined as
\begin{equation}
    T(f) = \frac{\hat{\mu}_{\mathbf{O}} - \hat{\mu}_{\mathbf{I}}}{\sqrt{\frac{1}{k}(\hat{\delta}^{2}_{\mathbf{O}} + \hat{\delta}^{2}_{\mathbf{I}})}},
\label{Equ:CSM}
\end{equation}
where $\hat{\mu}_{\mathbf{O}}$ and $\hat{\delta}^{2}_{\mathbf{O}}$ denote the average and variance of the anomaly scores of the $k$ predicted anomalous objects ($\hat{\mathbf{O}}$), respectively. Moreover, $\hat{\mu}_{\mathbf{I}}$ and $\hat{\delta}^{2}_{\mathbf{I}}$ represent the average and variance of the anomaly scores of the $k$ predicted normal objects ($\hat{\mathbf{I}}$) with the highest scores, respectively. Intuitively, the metric focuses on the $k$ predicted normal objects that are most similar to the $k$ predicted anomalous objects, and aims to measure the margin of the anomaly scores between them. It only takes linear time with respect to $n$ to compute.

\subsubsection{Analysis}
We now analyze why the internal evaluation metric Contrast Score Margin should work for our purposes.

\begin{theorem}[Minimizing False Positives and Negatives]For an anomaly detector $f(\cdot)$ on dataset $\mathbf{X}$, assume the anomaly scores of the top $k$ true anomalies $(\mathbf{O})$ have the expected value $\mu_{\mathbf{O}}$ and variance $\delta_{\mathbf{O}}^{2}$, and the anomaly scores of the top $k$ true normal objects with the highest anomaly scores $(\mathbf{I})$ have the expected value $\mu_{\mathbf{I}}$ and variance $\delta_{\mathbf{I}}^{2}$, then maximizing T is equal to simultaneously minimizing the false positive rate and the false negative rate . 
\end{theorem}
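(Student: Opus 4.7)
The plan is to show that the quantity $T$ is essentially a Mahalanobis-style signal-to-noise ratio between the score distributions of true anomalies $\mathbf{O}$ and true normal objects $\mathbf{I}$, and that this ratio directly upper-bounds both misclassification rates via a concentration inequality. I would first replace the empirical quantities $\hat{\mu}_{\mathbf{O}}, \hat{\mu}_{\mathbf{I}}, \hat{\delta}_{\mathbf{O}}^{2}, \hat{\delta}_{\mathbf{I}}^{2}$ by the assumed population quantities $\mu_{\mathbf{O}}, \mu_{\mathbf{I}}, \delta_{\mathbf{O}}^{2}, \delta_{\mathbf{I}}^{2}$ (justified because the top-$k$ predicted anomalies/normals converge to the true anomalies/normals when the detector is accurate, which is exactly the regime in which the theorem is meaningful). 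After this substitution $T$ becomes proportional to $(\mu_{\mathbf{O}}-\mu_{\mathbf{I}})/\sqrt{\delta_{\mathbf{O}}^{2}+\delta_{\mathbf{I}}^{2}}$, a monotone function of the classical separation index.

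Next, I would introduce an arbitrary decision threshold $\tau \in (\mu_{\mathbf{I}}, \mu_{\mathbf{O}})$ and apply the one-sided Chebyshev (Cantelli) inequality to the two score distributions. This yields
\begin{equation*}
\mathrm{FNR} \;=\; \Pr\bigl(s(\mathbf{O}) \leq \tau\bigr) \;\leq\; \frac{\delta_{\mathbf{O}}^{2}}{\delta_{\mathbf{O}}^{2}+(\mu_{\mathbf{O}}-\tau)^{2}},\qquad
\mathrm{FPR} \;=\; \Pr\bigl(s(\mathbf{I}) \geq \tau\bigr) \;\leq\; \frac{\delta_{\mathbf{I}}^{2}}{\delta_{\mathbf{I}}^{2}+(\tau-\mu_{\mathbf{I}})^{2}}.
\end{equation*}
I would then optimize the threshold $\tau$ by setting the derivative of $\mathrm{FPR}+\mathrm{FNR}$ with respect to $\tau$ to zero, which produces a closed-form optimal threshold of the form $\tau^{\star}=(\delta_{\mathbf{O}}\mu_{\mathbf{I}}+\delta_{\mathbf{I}}\mu_{\mathbf{O}})/(\delta_{\mathbf{O}}+\delta_{\mathbf{I}})$. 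Substituting back, the combined error bound collapses to an expression that is a strictly decreasing function of $(\mu_{\mathbf{O}}-\mu_{\mathbf{I}})^{2}/(\delta_{\mathbf{O}}+\delta_{\mathbf{I}})^{2}$, and hence of $T^{2}$. This establishes the desired implication: increasing $T$ drives a simultaneous upper bound on both error rates toward zero, while decreasing $T$ forces at least one of them upward.

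The main obstacle is the precise meaning of ``equal to'' in the theorem statement: $T$ maximization and joint error minimization are not literally equivalent as optimization problems (for instance, $T$ could be inflated by shifting scores without changing the ranking), so I would need to argue equivalence \emph{up to monotone rescaling of scores} and under the implicit assumption that the top-$k$ predicted sets well-approximate the top-$k$ true sets. A secondary difficulty is selecting the right concentration tool: Chebyshev gives a distribution-free bound but is loose, whereas a Gaussian assumption would give the tightest correspondence (with FPR and FNR expressed directly through $\Phi(-T)$) but sacrifices generality. I would present the Chebyshev version as the main argument and note the Gaussian refinement as a remark.
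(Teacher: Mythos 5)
Your proposal takes essentially the same route as the paper: both apply Cantelli's (one-sided Chebyshev) inequality separately to the score distributions of $\mathbf{O}$ and $\mathbf{I}$ to bound the false negative and false positive rates, and both conclude that enlarging the normalized gap $(\mu_{\mathbf{O}}-\mu_{\mathbf{I}})$ relative to the spreads drives both bounds down. Your version is in fact somewhat tighter than the paper's---the paper leaves the deviation multipliers $a,b$ unoptimized and asserts the equivalence to $T$ informally, whereas you optimize the threshold $\tau$ explicitly and correctly flag that ``equal to'' can only hold up to monotone equivalence (your denominator $\delta_{\mathbf{O}}+\delta_{\mathbf{I}}$ versus $T$'s $\sqrt{\delta_{\mathbf{O}}^{2}+\delta_{\mathbf{I}}^{2}}$ differ by a bounded factor, the same looseness present in the paper's own argument).
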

\begin{proof}
According to \textit{Cantelli's inequality}, which makes no assumptions on specific probability distributions, on the one hand, for $\mathbf{x} \in \mathbf{O}$  we have $P(f(\mathbf{x}) \leq \mu_{\mathbf{O}} - \alpha) \leq \frac{\delta_{\mathbf{O}}^{2}}{\delta_{\mathbf{O}}^{2}+\alpha^{2}}$, where $\alpha \geq 0$ is a small constant chosen based on a desired bound on the false negative. By replacing $\alpha = a\delta_{\mathbf{O}}$, we have $P(f(\mathbf{x}) \leq \mu_{\mathbf{O}} - a\delta_{\mathbf{O}}) \leq \frac{1}{1+a^{2}}$, which is the False Negative Bound. In other words, $f(x)$ has a maximum probability of $\frac{1}{1+a^{2}}$ to be less than $\mu_{\mathbf{O}} - a\delta_{\mathbf{O}}$.

On the other hand, for $\mathbf{y} \in \mathbf{I}$ we 
have $P(f(\mathbf{y}) \geq \mu_{\mathbf{I}} + \beta) \leq \frac{\delta_{\mathbf{I}}^{2}}{\delta_{\mathbf{I}}^{2}+\beta^{2}}$, where $\beta \geq 0$ is a small constant chosen based on a desired bound on the false positive. By replacing $\beta = b\delta_{\mathbf{I}}$, we have $P(f(\mathbf{y}) \geq \mu_{\mathbf{I}} + b\delta_{\mathbf{I}}) \leq \frac{1}{1+b^{2}}$, which is the False Positive Bound. In other words, $f(y)$ has a maximum probability of $\frac{1}{1+b^{2}}$ to be larger than $\mu_{\mathbf{I}} + b\delta_{\mathbf{I}}$.

Furthermore, $(\mu_{\mathbf{O}} - a\delta_{\mathbf{O}}) - (\mu_{\mathbf{I}} + b\delta_{\mathbf{I}})$ = $(\mu_{\mathbf{O}} - \mu_{\mathbf{I}}) - (b\delta_{\mathbf{I}} + a\delta_{\mathbf{O}}) $. Hence, to ensure a small false positive rate and a small false negative rate, we want $\mu_{\mathbf{O}} - \mu_{\mathbf{I}}$ to be as large as possible while $b\delta_{\mathbf{O}} + a\delta_{\mathbf{I}}$ as small as possible. In fact, this is equivalent to optimize the Contrast Score Margin, i.e.,
\begin{equation*}
    T(f) = \frac{\mu_{\mathbf{O}} - \mu_{\mathbf{I}}}{\sqrt{\frac{1}{k}(\delta^{2}_{\mathbf{O}} + \delta^{2}_{\mathbf{I}})}}
\end{equation*}

Note that if an anomaly detector $f(\cdot)$ produces a perfect anomaly detection result, i.e., for any $\mathbf{x} \in \mathbf{O}$ and any $ \mathbf{y} \in \mathbf{X} \setminus \mathbf{O}$, we have $f(\mathbf{x}) > f(\mathbf{y})$, then we will obtain  $\mu_{\mathbf{O}} - \mu_{\mathbf{I}} > 0$. In another extreme, if $f(\cdot)$ produces a poor anomaly detection result, i.e., for all $\mathbf{x} \in \mathbf{O}$ and any $ \mathbf{y} \in \mathbf{X} \setminus \mathbf{O}$, we have $f(\mathbf{x}) < f(\mathbf{y})$, then we will obtain  $\mu_{\mathbf{O}} - \mu_{\mathbf{I}} < 0$. Meanwhile, if an anomaly detector $f(\cdot)$ produces a random result, i.e., for some $\mathbf{x} \in \mathbf{O}$ and any $ \mathbf{y} \in \mathbf{X} \setminus \mathbf{O}$, we have $f(\mathbf{x}) < f(\mathbf{y})$, then we may obtain $\mu_{\mathbf{O}} - \mu_{\mathbf{I}} < 0$ or $\mu_{\mathbf{O}} - \mu_{\mathbf{I}} \approx 0$.
\end{proof}

\subsubsection{Improvements and Remarks}
In practice we observed that Equation~\ref{Equ:CSM} is not always stable. Possible reasons are that 1) the proportion of anomalies is usually very small (namely less than 5\% in most datasets); and 2) the exact number of anomalies is generally not known (even for a dataset with injected anomalies, there may exist some natural samples that exhibit similar behaviors as anomalies). Therefore, we propose to modify Equation~\ref{Equ:CSM} as follows:
\begin{equation}
\label{Equ:ImprovedCSM}
    T(f) = \frac{\hat{\mu}_{\mathbf{O}} - \Tilde{\mu}_{\mathbf{I}}}{\sqrt{\hat{\delta}^{2}_{\mathbf{O}} + \Tilde{\delta}^{2}_{\mathbf{I}}}},
\end{equation}
where {$\hat{\mu}_{\mathbf{O}}$} and $\hat{\delta}^{2}_{\mathbf{O}}$ denote the average and variance of the anomaly scores of the $k$ predicted anomalous objects, respectively. Importantly, $\Tilde{\mu}_{\mathbf{I}}$ and $\Tilde{\delta}^{2}_{\mathbf{I}}$ represent the average and variance of the anomaly scores of the remaining $n-k$ objects, respectively. This change should lead to more stable performance compared to using anomaly scores of the top-$k$ predicted normal objects in Equation~\ref{Equ:CSM}. This is because the true labels are not accessible, and thus we utilize the pseudo-labels to identify the  top-$k$  anomalous and the  top-$k$  normal objects. However, the pseudo-labels of the  top-$k$ ``pseudo-normal" objects may not be reliable due to the two facts stated above.

Moreover, to ensure the effectiveness of this internal evaluation strategy, we have to make sure that: 1) we use the same algorithm with different hyperparameter configurations; and 2) the scales of the loss values are approximately the same when combining multiple loss functions in the same algorithm. In other words, we should not directly use the strategy to select among different heterogeneous anomaly detection algorithms {(\color{black}please refer to Appendix~\ref{appendix:AutoGAD4ADSelection} for empirical evidence of this)}.

\subsection{Discretization and Grid Search}
\label{subsec:Search}
\begin{algorithm}
\caption{ {\color{black}Grid Search for Anomaly Detector Hyperparameter Optimization}}
\label{alg:grid_search}
\begin{algorithmic}[1]
\Require Graph anomaly detection algorithm $f(\cdot)$, graph $\mathcal{G}$, hyperparameter domains $\boldsymbol{\Lambda} = \{\boldsymbol{\Lambda}^{(1)}, \ldots, \boldsymbol{\Lambda}^{(L)}\}$, internal evaluation function $T(\cdot)$
\Ensure Best hyperparameter configuration $\boldsymbol{\lambda}_{\text{best}}$
\State Discretize each continuous domain $\boldsymbol{\Lambda}^{(l)}$ into a finite set if necessary
\State Generate hyperparameter search set $\boldsymbol{\lambda}_{\text{search}} = \{\boldsymbol{\lambda}_{1}, \ldots, \boldsymbol{\lambda}_{M}\}$ where $M = \prod_{l=1}^{L} |\boldsymbol{\Lambda}^{(l)}|$
\State Initialize best score $t_{\text{best}} \gets -\infty$ and best configuration $\boldsymbol{\lambda}_{\text{best}} \gets \emptyset$
\For{each $\boldsymbol{\lambda}_{m} \in \boldsymbol{\lambda}_{\text{search}}$}
    \State Compute anomaly scores $\boldsymbol{s}_{m}(\mathcal{G}) = f(\boldsymbol{\lambda}_{m}; \mathcal{G})$
    \State Compute evaluation score $t_{m}(\mathcal{G}) = T(\boldsymbol{s}_{m}(\mathcal{G}))$
    \If{$t_{m}(\mathcal{G}) > t_{\text{best}}$}
        \State Update $t_{\text{best}} \gets t_{m}(\mathcal{G})$
        \State Update $\boldsymbol{\lambda}_{\text{best}} \gets \boldsymbol{\lambda}_{m}$
    \EndIf
\EndFor
\State \Return $\boldsymbol{\lambda}_{\text{best}}$
\end{algorithmic}
\end{algorithm}

{\color{black} To find the optimal hyperparameter configuration, we first perform discretization of the continuous search space and then conduct grid search. The corresponding pseudo-code is provided in Algorithm~\ref{alg:grid_search}, with a detailed explanation presented below.

\textbf{Discretization of Continuous Search Space} (Lines 1--2).
To make the overall search process feasible, we discretize the hyperparameter space. Assume we are given a GAD algorithm $f(\cdot)$ with its set of hyperparameters $\boldsymbol{\lambda} \in \boldsymbol{\Lambda}$. Without loss of generality, we assume there are $L$ different hyperparameters and let $\boldsymbol{\lambda} = \{\lambda^{(1)}, \lambda^{(2)}, \ldots, \lambda^{(L)}\}$, where each $\lambda^{(l)} \in \boldsymbol{\Lambda^{(l)}}$ for $l = 1, 2, \ldots, L$. If a hyperparameter domain $\boldsymbol{\Lambda^{(l)}}$ is continuous, we discretize it into a finite set of values (with cardinality $\vert \boldsymbol{\Lambda^{(l)}} \vert$). This results in $M$ possible hyperparameter configurations, represented by the set $\boldsymbol{\lambda}_{\text{search}} = \{\boldsymbol{\lambda}_{1}, \ldots, \boldsymbol{\lambda}_{m},\ldots,\boldsymbol{\lambda}_{M}\}$, where $\boldsymbol{\lambda}_{m} = \{\lambda_{m}^{(1)},\lambda_{m}^{(2)},\cdots,\lambda_{m}^{(L)}\}$ and $M = \prod_{l=1}^{L} \vert \boldsymbol{\Lambda^{(l)}} \vert$.

\textbf{Grid Search} (Lines 3--11).
Once the hyperparameter search space is discretized, we apply grid search to evaluate each configuration. For each hyperparameter configuration $\boldsymbol{\lambda}_{m} \in \boldsymbol{\lambda}_{\text{search}}$, we run the GAD algorithm $f(\boldsymbol{\lambda}_{m})$ on the given graph $\mathcal{G}$ to produce a vector of anomaly scores $\boldsymbol{s}_{m}(\mathcal{G}) = f(\boldsymbol{\lambda}_{m}; \mathcal{G})$. These scores are evaluated using an internal unsupervised performance metric $T(\cdot)$ (with Equation~\ref{Equ:ImprovedCSM}) to yield a final score $t_{m}(\mathcal{G}) = T(\boldsymbol{s}_{m}(\mathcal{G}))$. The configuration that maximizes $T(\cdot)$ is selected as the optimal values of hyperparameters.

Note that more advanced strategies than grid search, such as SMBO-based optimization \citep{jones1998efficient}, could be employed (see Appendix \ref{appendix:SMBO} for an example). However, these methods often introduce additional hyperparameters (whose tuning may be non-trivial), which contradicts our goal of automated anomaly detection.
}

\section{Experiments}
\label{Sec:Experiments}

We aim to answer the following research questions (RQ):
\begin{itemize}
    \item[\textbf{RQ1}] How sensitive are existing SSL-based GAD methods to the values of their hyperparameters?
    \item[\textbf{RQ2}] How effective is \textit{AutoGAD} in tuning SSL-related hyperparameter values for these methods?
\end{itemize}
We describe the experiment settings, including the datasets, baselines, evaluation metrics, and software and hardware used, which is followed by the experiment results and their interpretation.

\subsection{Datasets}

We use three popular citation networks, namely Cora, Citeseer, and Pubmed \citep{sen2008collective} with injected anomalies, one social network Flickr \citep{zeng2019graphsaint} (less homophily) with injected anomalies, ACM \citep{tang2008arnetminer} as well as BlogCataLog \citep{zeng2019graphsaint} with injected anomalies. Particularly, we follow the methods used by ANEMONE \citep{jin2021anemone} and CoLA \citep{liu2021anomaly} to inject structural and contextual anomalies. Note that \cite{liu2022bond} have slightly modified this injection procedure. {\color{black} Following \citep{qiao2024truncated}, we also consider four commonly-used graph datasets with real anomalies: Amazon \citep{sanchez2013statistical}, Facebook \citep{leskovec2012learning}, Reddit \citep{kumar2019predicting}, and YelpChi \citep{rayana2015collective}.} The resulting datasets are summarized in Table~\ref{tab:data_summary}.

\begin{table}[!tbp]
	\caption{ {\color{black} Summary of datasets: anomalies in Cora, CiteSeer, PubWeb, ACM, BlogCatalog, and Flickr are synthetically injected following established methods \citep{jin2021anemone,liu2021anomaly}, while Amazon, Facebook, Reddit, and YelpChi contain real-world anomalies.}}
	\centering
	\begin{tabular}{ccccc}
		\toprule
		Dataset & \#Nodes & \#Edges & \#Attributes &\#Anomalies \\
		\midrule
            Cora \citep{sen2008collective} & 2708 & 11060 & 1433 & 138(5.1\%)\\
            CiteSeer \cite{sen2008collective} & 3327 & 4732 & 3703 &150(4.5\%)\\
            PubMed \citep{sen2008collective} & 19717 & 44338 & 500 &150(2.5\%)\\
            ACM \citep{tang2008arnetminer} & 16484 & 71980 & 8337 &600(3.6\%)\\
            BlogCataLog \citep{zeng2019graphsaint} & 5196 & 171743 & 8189 &300(5.8\%)\\
            Flickr \citep{zeng2019graphsaint} & 7575 & 239738 & 12407 & 450(5.9\%)\\
            Amazon \citep{sanchez2013statistical} & 10244 & 175608 & 25 & 693(6.7\%)\\
            Facebook \citep{leskovec2012learning} & 1081 & 55104	& 576 &	27(2.5\%)\\
            Reddit \citep{kumar2019predicting} &10984 & 168016	& 64	&366(3.3\%)\\
            YelpChi \citep{rayana2015collective} &24741	&49315	&32	&1217(4.9\%)\\
        \hline
	\end{tabular}
	\label{tab:data_summary}
\end{table}

\subsection{Baselines}
We study the performance of the following SSL-based graph anomaly detection methods:
\begin{itemize}
    \item Generative methods: DOMINANT \citep{ding2019deep}, AnomalyDAE \citep{fan2020anomalydae}, GUIDE \citep{yuan2021higher}, GAAN \citep{chen2020generative};
    \item Contrastive methods (and some also generative): CoLA \citep{liu2021anomaly}, ANEMONE \citep{jin2021anemone}, GRADATE \citep{duan2023graph}, SL-GAD \citep{zheng2021generative}, Sub-CR \citep{zhang2022reconstruction}, CONAD \citep{xu2022contrastive}.
\end{itemize}
Particularly, the SSL-related HPs for each GAD algorithm and their discretized search spaces are given in Table~\ref{tab:HP_search} in the Appendix.
These GAD methods are further summarized in Table~\ref{tab:algo_summary} in the Appendix.

\subsection{Evaluation Metrics}
To evaluate the effectiveness of various GAD algorithms, we utilize the ROC-AUC metric \citep{hanley1982meaning} ($AUC$ for short hereinafter), where a value approaching 1 denotes the best possible performance.

Moreover, to quantify the performance variation of an individual GAD method under different SSL-related HP configurations, we define the following \textit{performance variation} metric:
\begin{equation}
    \frac{\max(AUC)-\min(AUC)}{\max(AUC)},
\end{equation}
where $\max(AUC)$ and $\min(AUC)$ represent the maximum and minimum of achieved $AUC$ values for the evaluated GAD algorithm with different configurations, respectively. Hence, the smaller this value is, the less sensitive the algorithm is to SSL-related HPs. 

Further, we define the \textit{performance gain over minimal AUC} as
\begin{equation}
    \frac{\text{CSM}(AUC)-\min(AUC)}{\min(AUC)},
\end{equation}
where \text{CSM}($AUC$) indicates the $AUC$ value obtained for the evaluated GAD algorithm when configured with the HPs selected using the Contrast Score Margin. This metric can quantify the effectiveness of our strategy relative to the worst case hyperparameter setting. Next, we define \textit{performance gain over median AUC} 
as
\begin{equation}
    \frac{\text{CSM}(AUC)-\text{median}(AUC)}{\text{median}(AUC)},
\end{equation}
where $\text{median}(AUC)$ represents the median of the obtained $AUC$ values for the GAD algorithm with different configurations. Thus, if the value of this metric is positive, the GAD algorithm configured with our selected HPs can at least outperform its counterparts configured with $50\%$ of the other sampled hyperparameter values.

Furthermore, we define \textit{performance gain over maximal AUC} 
as
\begin{equation}
    \frac{\text{CSM}(AUC)-\text{max}(AUC)}{\text{max}(AUC)},
\end{equation}
where $\text{max}(AUC)$ represents the maximum of the obtained $AUC$ values for the GAD algorithm with different configurations. Thus, if the value of this metric is close to zero, the GAD algorithm configured with our selected HPs can approximately achieve the best possible performance.

\subsection{Software and Hardware}
All algorithms are implemented in Python 3.8 (using PyTorch \citep{paszke2019pytorch}
and PyTorch Geometric \citep{fey2019fast} libraries when applicable) and ran on workstations
equipped with AMD EPYC7453 CPUs (with 64GB RAM) and/or Nvidia RTX4090 GPUs (with 24.0 GB video memory). All code and datasets are available on GitHub\footnote{\url{https://github.com/ZhongLIFR/AutoGAD2024}}.

\subsection{Results and Analysis}

\begin{table*}[!h]
    \caption{Performance gain over minimal AUC defined as $\frac{\text{CSM}(AUC)-\min(AUC)}{\min(AUC)}$. Results are averaged on five independent runs. CSM is contrast score margin defined in Equation~\ref{Equ:ImprovedCSM}, {\color{black}while OOM, OOR, UNF, and NAN convey the same meanings as} in Table~\ref{tab:performance_variation}.}
    \centering
\resizebox{\textwidth}{!}{
    \begin{tabular}{p{2cm}p{1.1cm}p{1.1cm}p{1.1cm}p{1.1cm}p{1.1cm}p{1.7cm}p{1.1cm}p{1.1cm}p{1.1cm}p{1.1cm}|p{1cm}}
        \toprule
         & Cora & CiteSeer & PubMed & ACM & Flickr & BlogCatalog & Amazon & Facebook & Reddit & YelpChi &\textbf{Average} \\
        \midrule
        CoLA & 0.5\% & 1.2\% & 1.6\% & 2.8\% & 2.2\% & 4.7\% & 22\% & 1.8\% & 1.5\% & 2.5\% & 4.1\% \\
        ANEMONE & 8.6\% & 5.9\% & 6.6\% & 6.8\% & 15.8\% & 19.7\% & 44.7\% & 28.1\% & 4.0\% & 11.9\% & 15.2\%\\
        GRADATE & 4.0\% & 14.3\% & \cellcolor{black!25} OOM & \cellcolor{black!25} OOM & \cellcolor{black!25} OOR & \cellcolor{black!25} OOR & 4.3\% & 29.7\% & \cellcolor{black!25} OOR & \cellcolor{black!25} OOR & 13.1\% \\
        SL-GAD & 21.2\% & 19.1\% & 23.7\% & 21.3\% & 18.2\% & 30.4\% & 13.0\% & 16.3\% & 15.8\% & \cellcolor{black!25}OOR & 19.9\%\\
        Sub-CR & 16.2\% & 4.3\% & \cellcolor{black!25} OOM & \cellcolor{black!25} OOM & 4.3\% & 2.4\% &19.2\% & 25.3\%&\cellcolor{black!25} OOM &\cellcolor{black!25} OOM & 12.0\%\\
        CONAD & 5.4\% & 2.3\% & 2.1\% & \cellcolor{black!25}UNF & \cellcolor{black!25} OOM & \cellcolor{black!25} OOM &6.5\% &18.3\% &2.4\% &24.3\% &8.8\% \\
        DOMINANT & 5.2\% & 1.3\% & 1.8\% & \cellcolor{black!25}UNF & \cellcolor{black!25}UNF & \cellcolor{black!25}UNF &13.7\% &14.9\% &0.8\% &28.0\% &9.4\% \\
        A-DAE & 11.3\% & 4.6\% & 11.9\% & 5.6\% & 30.8\% & 32.2\% & 67.3\% &114.6\% & \cellcolor{black!25} NAN & \cellcolor{black!25} NAN &34.8\% \\
        GUIDE & 5.0\% & 1.2\% & 1.8\% & \cellcolor{black!25}UNF & 0.1\% & \cellcolor{black!25}UNF & 9.4\% & 14.3\% & 3.8\% & 28.8\% &8.1\% \\
        GAAN & 7.7\% & 34.1\% & 43.6\% & 5.6\% & 1.3\% & 6.6\% & 12.4\% & 77.9\% & 0.4\% & 14.5\% &20.4\% \\
        \bottomrule
    \end{tabular}
}
    \label{tab:results_gains_min_auc}
\end{table*}

\begin{table*}[!h]
    \caption{Performance gain over median AUC defined as $\frac{\text{CSM}(AUC)-\text{median}(AUC)}{\text{median}(AUC)}$. Results are averaged on five independent runs. CSM is contrast score margin defined in Equation~\ref{Equ:ImprovedCSM}, {\color{black}while OOM, OOR, UNF, and NAN convey the same meanings as in Table~\ref{tab:performance_variation}. For enhanced readability, cells are color-coded based on their values, as specified in the legend.}}
    \centering
\resizebox{0.6\textwidth}{!}{%
\begin{tabular}{|c|c|c|c|c|}
\hline
\cellcolor{orange!75}\textbf{Dark Orange} & \cellcolor{orange!25}\textbf{Light Orange} & 
\cellcolor{green!25}\textbf{Light Green} & \cellcolor{green!75}\textbf{Dark Green} & \cellcolor{black!25}\textbf{Grey} \\
\hline
$(-\infty,-5.0\%]$ & $(-5.0\%, 0.0\%)$ & $[0.0\%, 5.0\%)$ & $[5.0\%,+\infty)$ & Excluded \\
\hline
\end{tabular}
}
\vspace{0.5cm}
\resizebox{\textwidth}{!}{
\begin{tabular}{p{2cm}p{1.1cm}p{1.1cm}p{1.1cm}p{1.1cm}p{1.1cm}p{1.7cm}p{1.1cm}p{1.1cm}p{1.1cm}p{1.1cm}|p{1cm}}
    \toprule
     & Cora & CiteSeer & PubMed & ACM & Flickr & BlogCatalog & Amazon & Facebook & Reddit & YelpChi &\textbf{Average} \\
    \midrule
    CoLA & \cellcolor{orange!25} -0.1\% & \cellcolor{green!25} 0.1\% & \cellcolor{green!25} 0.2\% & \cellcolor{orange!25} -0.7\% & \cellcolor{green!25} 0.6\% &  \cellcolor{green!25} 2.0\% & \cellcolor{green!75}15.4\% & \cellcolor{green!25} 0.1\% & \cellcolor{orange!25} -0.2\% &  \cellcolor{orange!25}-3.3\% & \cellcolor{green!25}1.4\% \\
    ANEMONE & \cellcolor{green!25} 0.3\% & \cellcolor{green!25} 1.0\% & \cellcolor{green!25} 1.5\% & \cellcolor{orange!25} -0.8\% & \cellcolor{green!25} 2.0\% & \cellcolor{green!75} 7.2\% & \cellcolor{green!75}26.4\% & \cellcolor{green!25} 3.5\% & \cellcolor{orange!25} -2.4\% & \cellcolor{green!25} 1.6\% & \cellcolor{green!25} 4.0\% \\
    GRADATE & \cellcolor{orange!25} -0.6\% & \cellcolor{green!25} 4.0\% & \cellcolor{black!25}OOM & \cellcolor{black!25}OOM & \cellcolor{black!25}OOR & \cellcolor{black!25}OOR & \cellcolor{green!25} 0.7\% & \cellcolor{green!75} 18.3\% & \cellcolor{black!25}OOR & \cellcolor{black!25}OOR &\cellcolor{green!75}5.6\% \\
    SL-GAD & \cellcolor{green!25} 3.3\% & \cellcolor{green!25} 3.7\% & \cellcolor{green!25} 4.8\% & \cellcolor{green!25} 4.3\% & \cellcolor{green!25} 2.8\% & \cellcolor{green!75} 5.0\% & \cellcolor{orange!25} -1.4\% & \cellcolor{orange!75} -31.6\% & \cellcolor{orange!25} -2.0\%   & \cellcolor{black!25}OOR &  \cellcolor{orange!25} -1.2\%\\
    Sub-CR & \cellcolor{orange!25} -1.6\% & \cellcolor{orange!25} -0.4\% & \cellcolor{black!25}OOM & \cellcolor{black!25}OOM & \cellcolor{orange!25} -3.2\% & \cellcolor{orange!25} -2.2\% &\cellcolor{green!25} 2.6\% &\cellcolor{green!75} 9.8\% &\cellcolor{black!25}OOM &\cellcolor{black!25}OOM & \cellcolor{green!25} 0.8\% \\
    CONAD & \cellcolor{green!25} 4.0\% & \cellcolor{green!25} 1.2\% & \cellcolor{green!25} 1.5\% & \cellcolor{black!25}UNF & \cellcolor{black!25}OOM & \cellcolor{black!25}OOM &\cellcolor{orange!25}-3.7\% &\cellcolor{green!25}2.5\% &\cellcolor{orange!25}-3.1\% &\cellcolor{green!25}1.5\% &\cellcolor{green!25}0.6\% \\
    DOMINANT & \cellcolor{green!25} 3.7\% & \cellcolor{green!25} 0.5\% & \cellcolor{green!25} 1.3\% & \cellcolor{black!25}UNF & \cellcolor{black!25}UNF & \cellcolor{black!25}UNF &\cellcolor{green!25}4.4\% &\cellcolor{orange!25}-1.8\% &\cellcolor{orange!25}-3.0\% &\cellcolor{green!25}4.8\% &\cellcolor{green!25}1.4\% \\
    A-DAE & \cellcolor{green!25} 2.9\% & \cellcolor{orange!25} -3.0\% & \cellcolor{orange!25} -2.9\% & \cellcolor{orange!25} -4.1\% & \cellcolor{green!25} 0.9\% & \cellcolor{orange!25} -3.4\% &\cellcolor{green!25}2.6\% &\cellcolor{green!25}0.7\% & \cellcolor{black!25} NAN & \cellcolor{black!25} NAN &\cellcolor{orange!25}-0.8\% \\
    GUIDE & \cellcolor{green!25} 3.8\% & \cellcolor{green!25} 0.6\% & \cellcolor{green!25} 1.5\% & \cellcolor{black!25}UNF & \cellcolor{orange!25} -0.3\% & \cellcolor{black!25}UNF &\cellcolor{green!25}2.1\% &\cellcolor{orange!25}-2.3\% & \cellcolor{green!25} 0.2\% & \cellcolor{green!75} 5.3\% &\cellcolor{green!25}1.4\% \\
    GAAN & \cellcolor{green!25} 2.8\% & \cellcolor{green!75} 27.5\% & \cellcolor{green!75} 35.6\% & \cellcolor{green!25} 3.5\% & \cellcolor{green!25} 0.7\% & \cellcolor{green!25} 4.7\% & \cellcolor{orange!25} -2.4\% & \cellcolor{orange!75} -45.6\% & \cellcolor{orange!25} -1.3\% & \cellcolor{orange!25} -0.5\% &\cellcolor{green!25}2.5\% \\
    \bottomrule
\end{tabular}
}
    \label{tab:results_gains_median_auc}
\end{table*}

\begin{table*}[!h]
	\caption{Performance gain over maximal AUC defined as $\frac{\text{CSM}(AUC)-\text{max}(AUC)}{\text{max}(AUC)}$. Results are averaged on five independent runs. CSM is contrast score margin defined in Equation~\ref{Equ:ImprovedCSM}, while OOM, OOR, and NAN convey the same meanings as in Table~\ref{tab:performance_variation}. }
	\centering
\resizebox{\textwidth}{!}{
    \begin{tabular}{p{2cm}p{1.1cm}p{1.1cm}p{1.1cm}p{1.1cm}p{1.1cm}p{1.7cm}p{1.1cm}p{1.2cm}p{1.1cm}p{1.1cm}|p{1.1cm}}
        \toprule
         & Cora & CiteSeer & PubMed & ACM & Flickr & BlogCatalog & Amazon & Facebook & Reddit & YelpChi &\textbf{Average} \\
		\midrule
		CoLA & -0.6\% & -0.5\% & -0.1\% & -1.5\% & -1.3\% & -0.3\% &0\% & -1.7\% &-1.5\% & -30.2\%&-3.8\% \\
		ANEMONE & -1.1\% & -1.1\% & -0.2\% & -5.4\% & -3.9\% & -0.4\% & -2.6\% & -2.6\% & -5.3\% & -30.3\% & -5.3\% \\
		GRADATE & -3.2\% & -1.9\% & \cellcolor{black!25} OOM & \cellcolor{black!25} OOM & \cellcolor{black!25} OOR & \cellcolor{black!25} OOR & -1.8\% & 0.0\% & \cellcolor{black!25} OOR & \cellcolor{black!25} OOR &-1.7\% \\
		SL-GAD & -0.4\% & -0.3\% & -0.4\% & -0.4\% & -1.2\% & -0.2\% & -15.8\% & -39.4\% & -9.4\% & \cellcolor{black!25} OOR & -7.5\% \\
		Sub-CR & -3.8\% & -4.5\% & \cellcolor{black!25} OOM & \cellcolor{black!25} OOM & -5.9\% & -4.1\% & -15.3\%& -0.2\% & \cellcolor{black!25} OOM & \cellcolor{black!25} OOM & -5.6\% \\
		CONAD & -0.8\% & -4.9\% & -0.2\% & \cellcolor{black!25}UNF & \cellcolor{black!25} OOM & \cellcolor{black!25} OOM &-11.9\% &-14.0\% &-7.6\% &-26.3\% &-9.3\% \\
		DOMINANT & -0.2\% & -4.8\% & -0.1\% & \cellcolor{black!25}UNF & \cellcolor{black!25}UNF & \cellcolor{black!25}UNF &-0.6\% &-7.1\% &-7.8\% &-16.2\% & -5.3\% \\
		A-DAE & -10.0\% & -21.8\% & -14.6\% & -16.4\% & -0.1\% & -4.8\% &-18.5\% &-26.3\% & \cellcolor{black!25} NAN & \cellcolor{black!25} NAN & -14.1\% \\
		GUIDE & 0\% & -3.6\% & 0\% & \cellcolor{black!25}UNF & -8.4\% & \cellcolor{black!25}UNF &-3.1\% &-7.1 \% & -4.6\% &-15.4\% &-5.3\% \\
		GAAN & -22.6\% & -6.7\% & 0\% & -21.6\% & -8.9\% & -1.2\% &-2.6\% &-53.7\% & -11.6\% & -0.9\%&-13.0\% \\
		\bottomrule
	\end{tabular}
    }
	\label{tab:results_gains_max_auc}
\end{table*}

We answer the research questions as follows:

\subsubsection{\textbf{RQ1}: Sensitivity of SSL-based GAD methods to HPs}

The results are summarized in Table~\ref{tab:performance_variation} for five independent runs. Typical runs are depicted in Figure \ref{Fig:ROC_Var} and in Figures \ref{Fig:ROC_Var_CoLA}-\ref{Fig:ROC_Var_GAAN} in Appendix \ref{appendix:FurtherAnalysis}. We briefly analyzed the results in Subsection \ref{subsec:SystematicEvaluations}; more detailed analyses are given in Appendix \ref{appendix:FurtherAnalysis}. To recall, {\color{black}five} out of ten algorithms show moderate performance variations, while the remaining {\color{black}five} algorithms demonstrate large performance variations when the values of SSL-related HPs are varied. In other words, SSL-based GAD methods are (sometimes highly) sensitive to hyperparameter values.

\subsubsection{\textbf{RQ2}: Effectiveness of AutoGAD in tuning SSL-related HPs} The results are summarized in Tables~\ref{tab:results_gains_min_auc}, \ref{tab:results_gains_median_auc} and \ref{tab:results_gains_max_auc} for five independent runs, while Figure \ref{Fig:ROC_Var} and Figures \ref{Fig:ROC_Var_CoLA}-\ref{Fig:ROC_Var_GAAN} depict typical runs. We have the following main observations:
\begin{itemize}
    \item[1)] From Table~\ref{tab:results_gains_min_auc}, {\color{black}one can see that \textit{AutoGAD} can result in moderate \textit{performance gain over minimal AUC} (namely between $4.1\%$ and $13.1\%$ on average) for CoLA, GUIDE, CONAD, DOMINANT, Sub-CR, and GRADATE. Recall that five of these algorithms (including CoLA, GUIDE, DOMINANT, GRADATE, and Sub-CR) exhibit moderate performance variations, ranging from $7.3\%$ to $14.7\%$ on average. Moreover, \textit{AutoGAD} leads to large \textit{performance gain over minimal AUC} (namely between $15.2\%$ and $34.8\%$ on average) for the remaining four algorithms, which suffer from large performance variations (namely between $17.0\%$ and $30.0\%$ on average). Overall, \textit{AutoGAD} is substantially better than the worst case, i.e., when one happens to select the HP values that give the smallest $AUC$ value.}
    \item[2)] From  Table~\ref{tab:results_gains_median_auc}, {\color{black}one can see that \textit{AutoGAD} can result in positive \textit{performance gain over median AUC} in 8 out 10 algorithms (ranging from $0.6\%$ to $5.6\%$ on average), implying that the HP values selected by \textit{AutoGAD} are  better than at least $50\%$ of randomly selected HP values. Particularly, the \textit{performance gains over median AUC} for GRADATE \citep{duan2023graph}, ANEMONE \citep{jin2021anemone}, and GAAN \citep{chen2020generative} are $5.6\%, 4.0\%, $ and $2.5\%$ respectively, which shows that \textit{AutoGAD} is highly effective for these methods.}
    \item[3)] From Table~\ref{tab:results_gains_max_auc}, {\color{black}one can see that \textit{AutoGAD} can result in  \textit{performance gain over max AUC} larger than $-10\%$ in 8 out 10 algorithms, implying that the HP values selected by \textit{AutoGAD}  can achieve performances that are comparable to optimal performances. For instance, the \textit{performance gains over max AUC} for GRADATE and SL-GAD are $-1.7\%$ and $ -7.5\%$ respectively, which shows that \textit{AutoGAD} is highly effective for these methods while they show moderate or large performance variations ($12.5\%$ and $22.8\%$ respectively).}
    \item[4)] Following the above observations, {\color{black}we check the details in Figure~\ref{Fig:ROC_Var_SL-GAD} for SL-GAD, Figure~\ref{Fig:ROC_Var_GRADATE} for GRADATE, and
    Figure~\ref{Fig:ROC_Var} for ANEMONE. For SL-GAD and GRADATE, \textit{AutoGAD} often selects HP values better than $90\%$ of randomly selected HPs values on most datasets. For ANEMONE, the HP values selected by \textit{AutoGAD} often outperform $75\%$ of randomly selected HP values}.
\end{itemize}

\subsubsection{Sensitivity Analysis}

\begin{figure}
\centering
\includegraphics[width=10cm]{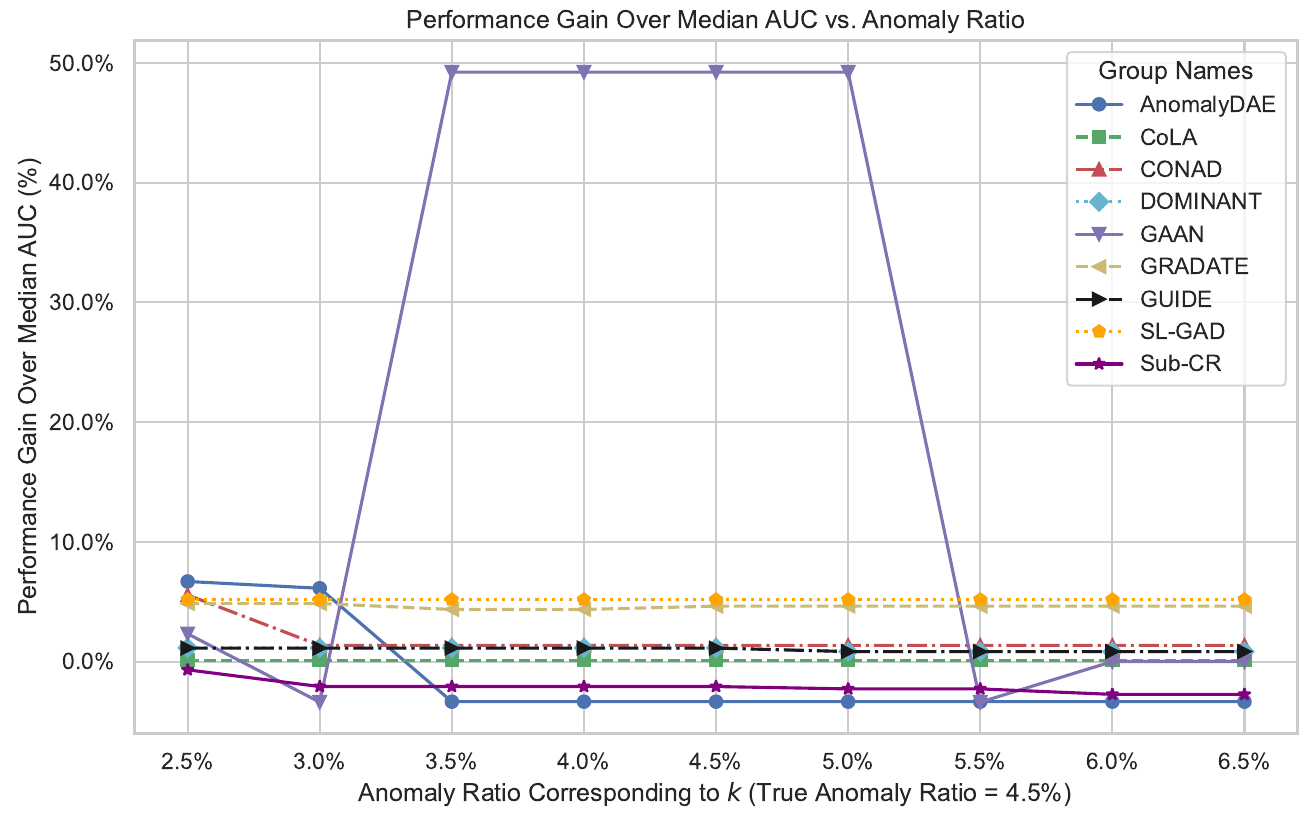}
\caption{Sensitivity analysis of $k$ (for our proposed AutoGAD) on dataset CiteSeer with all investigated SSL-based GAD algorithms. It can be seen that AutoGAD remains stable as long as  $k$ is not drastically distant from the actual anomaly ratio (namely $4.5\%$) for all SSL-based GAD algorithms.}
\label{Fig:SensitivityAnalysis}
\end{figure}

\textbf{Sensitivity to $k$.} The selection of the value of $k$ in our experiments acknowledges the varying anomaly ratios across different datasets, implying that  $k$ should ideally differ to reflect the unique characteristics of each dataset. We operated under the assumption that the anomaly ratio within a dataset is approximately known, a premise that aligns with real-world anomaly detection tasks where some prior knowledge about the frequency of anomalies is often available.

As shown in Figure~\ref{Fig:SensitivityAnalysis}, we conducted a sensitivity analysis on $k$ to assess the stability of AutoGAD against deviations from the true anomaly ratio. The findings from this analysis indicate that the effectiveness of AutoGAD remains stable as long as  $k$ is not drastically distant from the actual anomaly ratio, reinforcing the practical applicability of our approach even when exact anomaly proportions are not precisely determined.

\begin{figure}[h!]
    \centering
    \begin{minipage}[t]{0.32\linewidth}
        \centering
        \includegraphics[width=\linewidth]{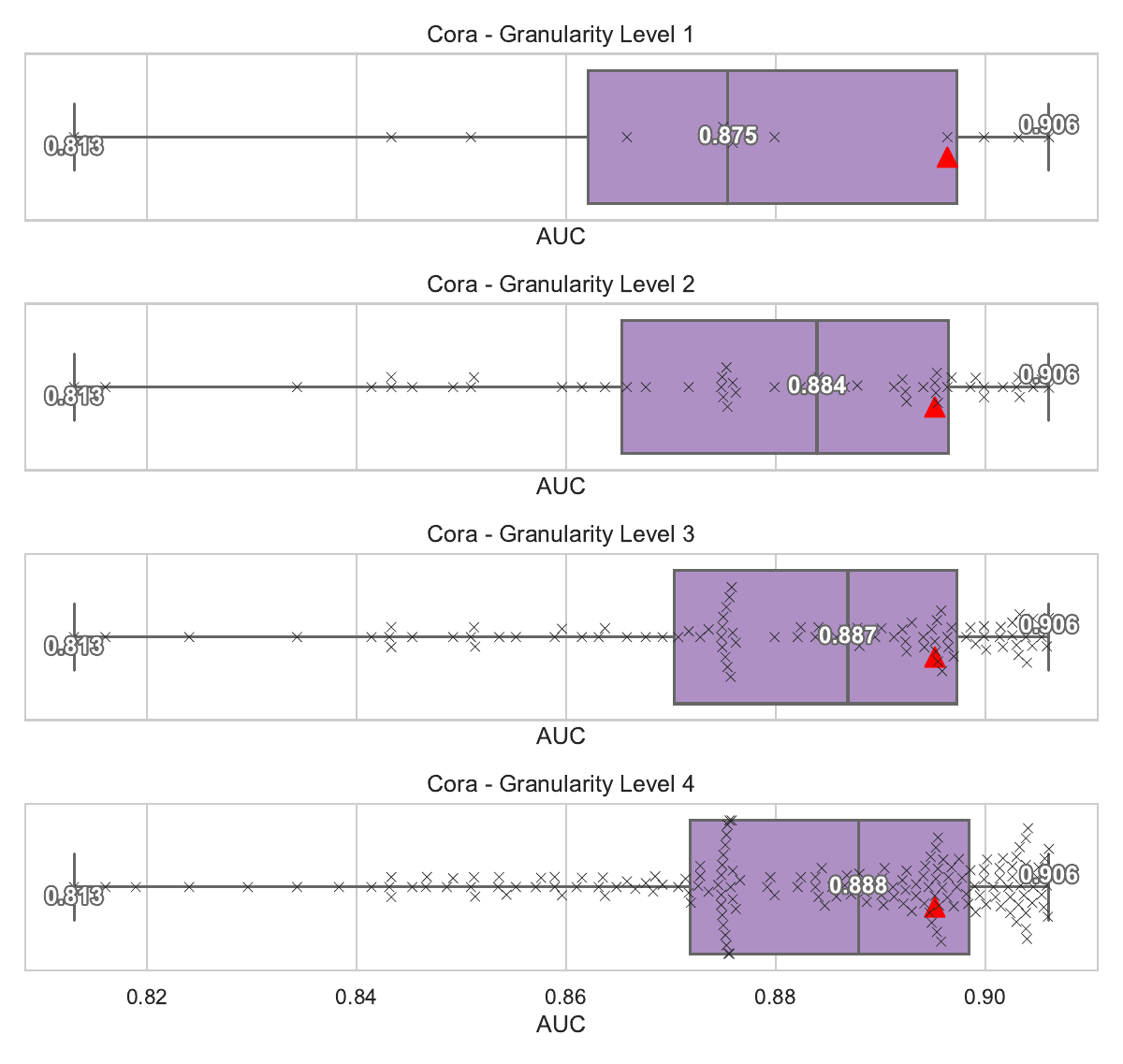}
        \label{fig:cora-gra-level}
    \end{minipage}
    \hfill
    \begin{minipage}[t]{0.32\linewidth}
        \centering
        \includegraphics[width=\linewidth]{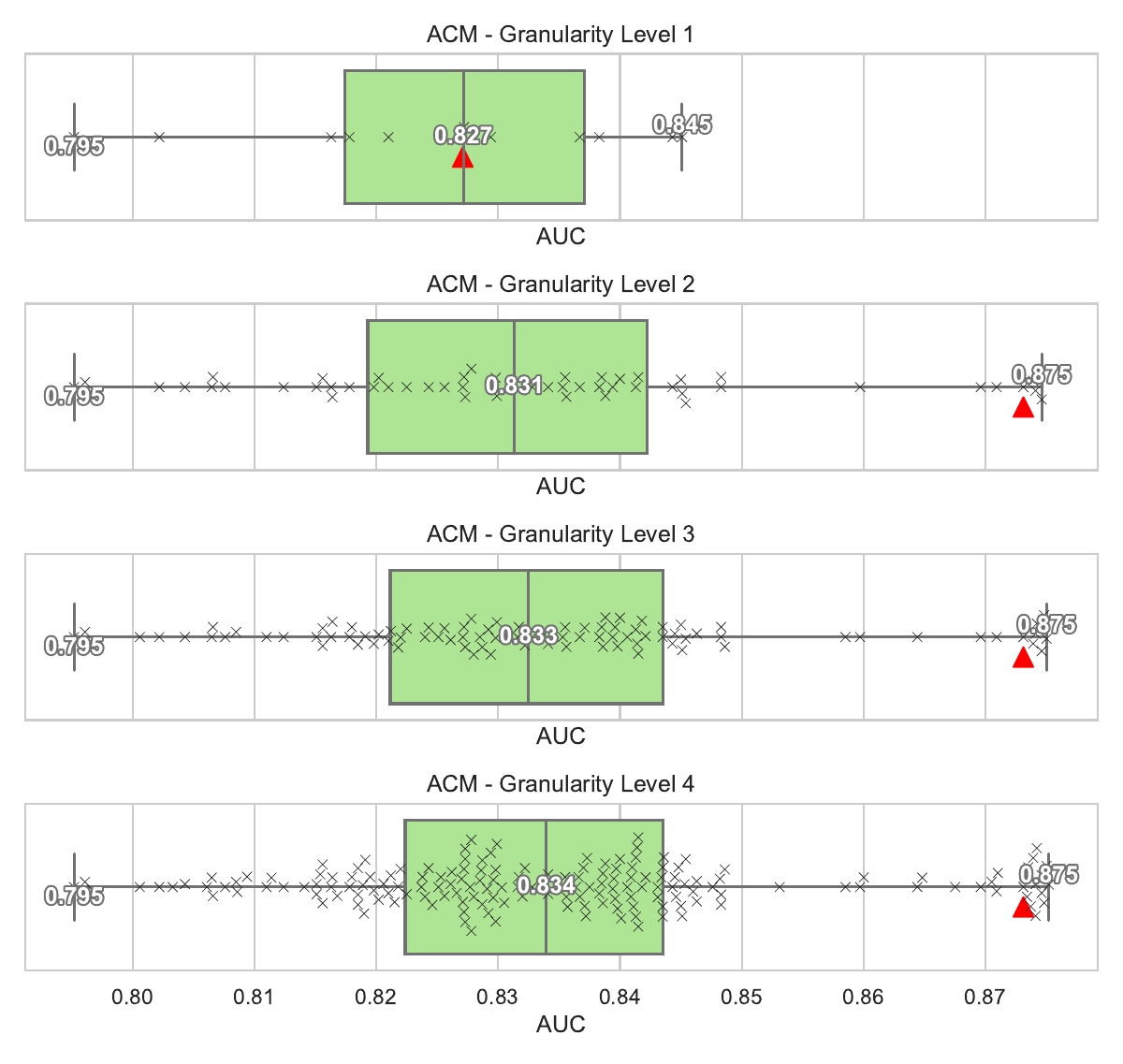}
        \label{fig:acm-gra-level}
    \end{minipage}
    \hfill
    \begin{minipage}[t]{0.32\linewidth}
        \centering
        \includegraphics[width=\linewidth]{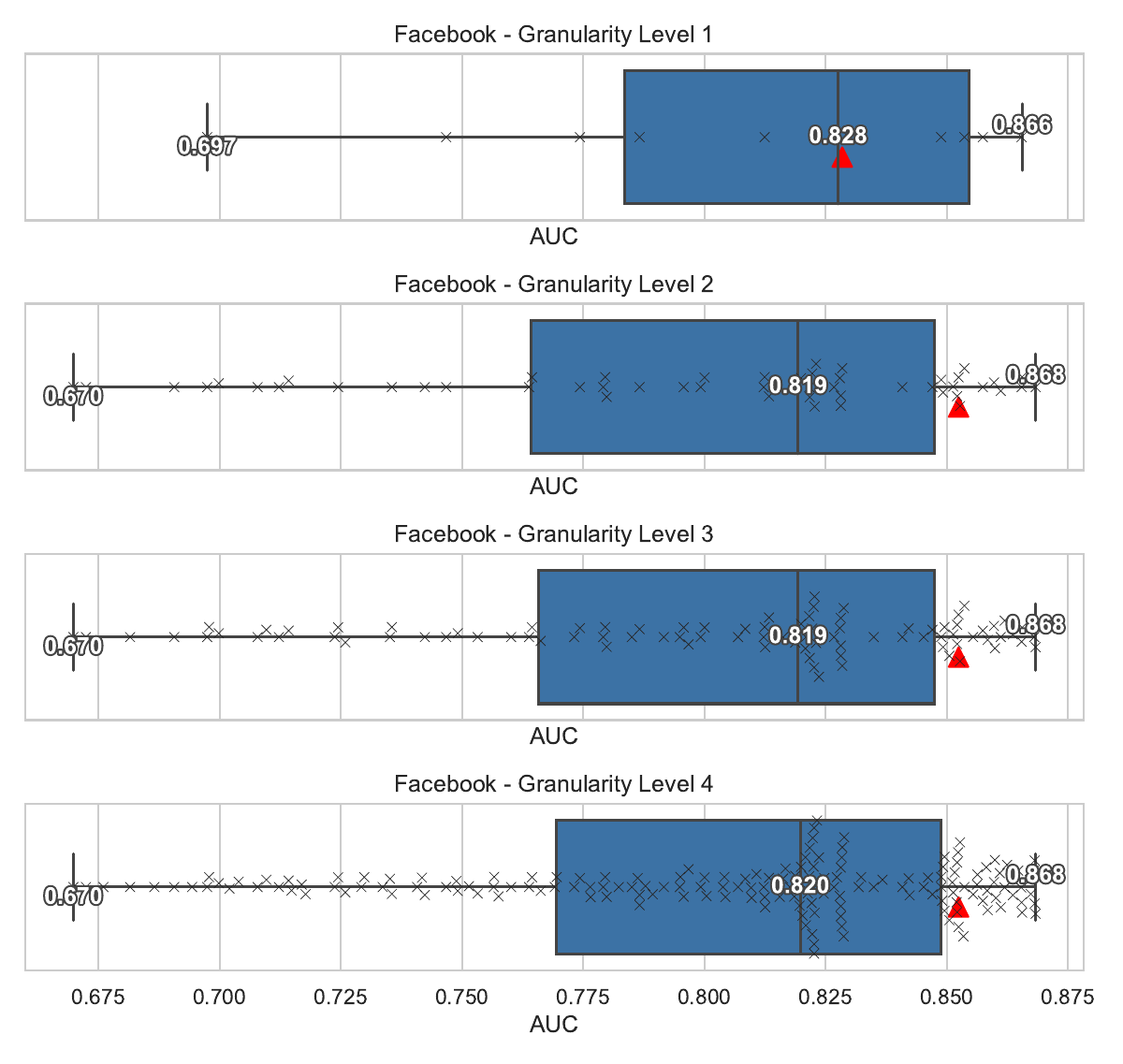}
        \label{fig:facebook-gra-level}
    \end{minipage}
    \caption{{\color{black}Performance of AutoGAD across different granularity levels of search grids using ANEMONE on the Cora, ACM, and Facebook datasets. Similar trends were observed for other anomaly detectors and datasets, which are omitted for brevity.}}
    \label{fig:comparison-gra-level}
\end{figure}

{\color{black}\textbf{Sensitivity to the Granularity of the Search Grid.} Acknowledging the significance of search space granularity in the performance of AutoGAD, we conduct a sensitivity analysis by varying the granularity levels of the search grids in grid search. Figure~\ref{fig:comparison-gra-level} presents representative results using ANEMONE \citep{jin2021anemone} on the Cora, ACM, and Facebook datasets with four levels of search granularity, as follows:
\begin{itemize}
    \item Granularity Level 1: $\alpha \in \{$0, 0.2, 0.4, 0.6, 0.8, 1$\}$, $K \in \{$2, 4$\}$;
    \item Granularity Level 2: $\alpha \in \{$0, 0.01, 0.1, 0.2, 0.3, 0.4, 0.5, 0.6, 0.7, 0.8, 0.9, 0.99, 1$\}$, $K =\{$2, 3, 4, 5$\}$;
    \item Granularity Level 3: $\alpha \in \{$0, 0.01, 0.05, 0.1, 0.15, 0.2, 0.25, 0.3, 0.35, 0.4, 0.45, 0.5, 0.55, 0.6, 0.65, 0.7, 0.75, 0.8, 0.85, 0.9, 0.95, 0.99, 1 $\}, K \in \{$2, 3, 4, 5$\}$;
    \item Granularity Level 4: $\alpha \in \{$0, 0.01, 0.025, 0.05, 0.075, 0.1, 0.125, 0.15, 0.175, 0.2, 0.225, 0.25, 0.275, 0.3, 0.325, 0.35, 0.375, 0.4, 0.425, 0.45, 0.475, 0.5, 0.525, 0.55, 0.575, 0.6, 0.625, 0.65, 0.675, 0.7, 0.725, 0.75, 0.775, 0.8, 0.825, 0.85, 0.875, 0.9, 0.925, 0.95, 0.975, 0.99, 1 $\}, K \in \{$2, 3, 4, 5, 6, 7$\}$.
\end{itemize}
The results indicate that finer search grids tend to improve the performance of AutoGAD. This is expected, as the optimal value achievable in a finer search grid cannot be worse than that in a coarser grid. Similar observations were made for other anomaly detection methods and datasets, which are omitted here for brevity.
}

{\color{black}
\section{Alternative Strategies and Discussion}
Internal evaluation strategies aim to assess the quality of a model based solely on internal information, without relying on external information such as ground-truth labels. Internal information can typically be derived from two sources: 1) the input samples, such as feature values of instances in tabular data or node attributes in graph data; or 2) the anomaly scores generated by an anomaly detection model. Beyond the Contrast Score Margin \citep{xu2019automatic} discussed in this paper, additional internal evaluation strategies exist for unsupervised model selection in anomaly detection. According to \citet{ma2023need}, these strategies can be categorized as  \emph{stand-alone} or \emph{consensus-based} internal evaluation strategies; we will next discuss each category.

\subsection{Stand-alone Internal Evaluation Strategies}
Stand-alone strategies rely solely on input samples or individual anomaly detection methods (or models with specific HP configurations in our setting) and their output anomaly scores. Key methods include:

\begin{itemize}
    \item \textbf{IROES} \citep{marques2015internal,marques2020internal} quantifies the separability of each input sample, assuming that a good anomaly detection model assigns high anomaly scores to highly separable samples. However, separability scores are defined only for tabular data, making extension to graph data non-trivial. Additionally, computing separability scores is computationally expensive, posing challenges for large datasets.
    \item \textbf{Mass-Volume and Excess-Mass} \citep{goix2016evaluate} use statistical tools to measure the quality of an anomaly scoring function. These methods operate on the raw input samples rather than anomaly scores and assume that anomalies occur in the distribution's tail. However, they are restricted to tabular data and are not applicable to graph data.
    \item \textbf{Clustering Validation Metrics} \citep{nguyen2016evaluation} assume that an anomaly detector divides input samples into two clusters: abnormal and normal. Clustering validation metrics, such as the Xie-Beni index \citep{xie1991validity}, are then used to evaluate performance. While clustering coefficients on graphs could be analogous \citep{li2017clustering}, these metrics are computationally expensive, particularly for large datasets.
\end{itemize}

\subsection{Consensus-based Internal Evaluation Strategies}
Consensus-based strategies assess the agreement among multiple anomaly detection models (or the same model with varying HP configurations in our setting). Key methods include:

\begin{itemize}
    \item \textbf{UDR} \citep{duan2019unsupervised} assumes that good HP configurations yield consistent results under different random initializations, while poor configurations do not. \citet{ma2023need} repurposed UDR to select among heterogeneous anomaly detectors, assuming that good detectors produce consistent results across HP configurations.
    \item \textbf{Model Centrality} \citep{lin2020infogan} hypothesizes that good models are close to the optimal model and thus to each other. 
    \item \textbf{Model Centrality by HITS} \citep{kleinberg1999authoritative} follows a similar hypothesis but employs a different computation approach.
    \item \textbf{Unsupervised Anomaly Detection Ensembling} \citep{ma2023need} infers pseudo anomaly labels by aggregating outputs from a predefined subset of good models. However, this method is less feasible in our setting as there is no such pre-defined good models.
\end{itemize}

Two challenges remain when utilizing these strategies in our setting: 1) validating the underlying assumptions, which often lack theoretical justification, and 2) addressing their computational expenses, as consensus-based methods require pairwise comparisons. In contrast, Contrast Score Margin is computationally efficient, as it operates on anomaly scores rather than on raw data points and it avoids pairwise comparisons.

\subsection{Discussion and Future Work}
Although \citet{ma2023need} demonstrated that many internal evaluation strategies perform suboptimally for selecting heterogeneous anomaly detectors, we hypothesize that some can be valuable for hyperparameter tuning within a single anomaly detection model. However, this is beyond the scope of this paper and is left for future work. The primary objectives of this paper are twofold:
\begin{itemize}
    \item We highlight flaws in existing studies on using SSL for unsupervised graph anomaly detection. Specifically, we:
    \begin{enumerate}
        \item Review these studies, showing that most tune HPs arbitrarily or selectively.
        \item Demonstrate empirically, through extensive experiments, that these methods are highly sensitive to HP settings. Consequently, we argue that these methods may suffer from label information leakage under unsupervised learning settings, leading to overstated performance in practical scenarios where label-based tuning is inaccessible.
    \end{enumerate}
    \item We propose an initial solution to these issues by utilizing and improving the Contrast Score Margin. This internal evaluation metric was selected for two reasons:
    \begin{enumerate}
        \item It operates on anomaly scores rather than on raw data points and avoids pairwise computations, making it computationally efficient and suitable for large datasets.
        \item Theoretical guarantees for its properties are provided by Theorem 1, which may not hold for other internal evaluation strategies.
    \end{enumerate}
\end{itemize}

This paper does not aim to provide a perfect solution to the issues mentioned above. Instead, our goal is to spark interest in the research community to address these challenges. Unlike \citet{ma2023need}, we do not aim to conduct a comprehensive review and evaluation of internal evaluation strategies for SSL-based graph anomaly detection, as this requires significant computational resources and in-depth analysis. Nevertheless, we aim to explore this direction in future work by considering and potentially repurposing the internal evaluation strategies reviewed in \citet{ma2023need}. We have described a more advanced search strategy than grid search, namely SMBO-based optimization \citep{jones1998efficient}, in Appendix \ref{appendix:SMBO}, without experimental evaluation. This is because this method introduces additional hyperparameters and their tuning is non-trivial, contradicting our goal of automated anomaly detection. Other advanced hyper-parameter tuning methods \citep{yang2020hyperparameter,bischl2023hyperparameter,zhao2024hpod} to speed up the search are possible, and we leave their explorations for future work.
}

\section{Conclusions}
SSL has received much attention in recent years, and many recent studies have explored SSL to perform unsupervised GAD. However, we found that most existing studies tune hyperparameters arbitrarily or selectively (i.e., guided by labels), and our empirical findings reveal that most methods are highly sensitive to hyperparameter settings. Using label information to tune hyperparameters in an unsupervised setting, however, is label information leakage and leads to severe overestimation of model performance. To mitigate this issue, we introduce AutoGAD, the first automated hyperparameter selection method for SSL-based unsupervised GAD. Extensive experiments demonstrate the effectiveness of our proposed strategy. Overall, we aim to raise awareness to the label information leakage issue in the unsupervised GAD field, and AutoGAD provides a first step towards achieving truly unsupervised SSL-based GAD.

\section*{Statement and Declaration}
\backmatter


\bmhead{Ethical approval} This study does not involve human and animal data, and thus the need for approval was waived.

\bmhead{Funding}
This work is supported by Project 4 of the Digital Twin research programme, a TTW Perspectief programme with project number P18-03 that is primarily financed by the Dutch Research Council (NWO). All opinions, findings, conclusions and recommendations in this paper are those of the authors and do not necessarily reflect the views of the funding agencies.

\bmhead{Conflict of interest} The author(s) declared no potential conflicts of interest with
respect to the research, authorship and/or publication of this
article.

\bmhead{Availability of data and materials}
For reproducibility, all code and datasets are provided online via the following link: \href{https://github.com/ZhongLIFR/AutoGAD2024}{https://github.com/ZhongLIFR/AutoGAD2024}.

\bmhead{Authorship Contribution}

\textit{Zhong Li}: Conceptualization, Methodology, Validation, Investigation, Software, Writing, Visualisation, Project Administration. \textit{Yuhang Wang}: Methodology, Investigation, Software, Writing. \textit{Matthijs van Leeuwen}: Methodology, Validation, Writing, Funding acquisition, Supervision.

\bibliography{References}

\appendix

\section{Pitfalls in Existing Methods (Full Analysis)}
\label{appendix:pitfalls}
\subsection{CoLA} Particularly, CoLA \citep{liu2021anomaly} is the first \textit{contrastive-based} framework for unsupervised GAD. The design of its \textit{data augmentation} module and \textit{contrast learning} module is as follows.

\textbf{Data Augmentation Module} They consider one type of data augmentation, subgraph sampling, to obtain local augmented view for each node. Particularly, they employ RWR \citep{tong2006fast} to generate a sub-graph with a fixed size $K$ in subgraph sampling, resulting in one critical HP in graph augmentation, namely $K$.

\textbf{Contrast Learning Module} They consider a single contrast aspect, namely node-subgraph contrast between the embedding of the target node and the aggregated embedding of its local sug-graph, without resulting in any HPs.

\textbf{HPs Sensitivity \& Tuning}
They conducted sensitive analysis and found that the selection of subgraph size $K$ is dependent on the specific dataset. The AUC performance usually increases first and then decreases with the increasing of $K$. However, for efficiency and robustness consideration, they heuristically set the sampled subgraph size $K=4$ for all datasets. 

\subsection{ANEMONE} ANEMONE \citep{jin2021anemone} is a \textit{contrastive-based} framework for unsupervised GAD. They argue that modeling the relationships in a single contrastive perspective leads to limited capability of capturing complex anomalous patterns, and thus propose additional contrast perspectives as follows.

\textbf{Graph Augmentation Module} They consider a single graph augmentation operation, namely Random Ego-Nets generation with a fixed size $K$. Specifically, taking the target node as the center, they employ RWR \citep{tong2006fast} to generate two different subgraphs as ego-nets with a fixed size $K$. Overall, they result in one critical HP in graph augmentation, namely $K$.

\textbf{Contrast Learning Module} They consider two contrast perspectives: 1) node-node contrast between the embedding of masked target node within ego-net and the embedding of the original node, leading to loss term $\mathcal{L}_{NN}$, and 2) node-subgraph contrast within each view, leading to loss term $\mathcal{L}_{NS}$. On this basis, they combine these loss terms as $$\mathcal{L} = (1-\alpha)\mathcal{L}_{NN}+\alpha\mathcal{L}_{NS}$$ where $\alpha \in [0,1]$ is the trade-off HP. Hence, they result in one critical HP in graph contrast, namely $\alpha$.

\textbf{HPs Sensitivity \& Tuning} In their ablation studies: 1) by using ground-truth label information, they heuristically set $\alpha$ as $0.8, 0.6, 0.8$ on Cora, CiterSeer and PubMed respectively,  and report the corresponding results; and 2) the setting of $K$ was not studied, and it is set to 4 for all datasets.

\subsection{GRADATE} GRADATE \citep{duan2023graph} is also a \textit{contrastive-based} framework. They argue that subgraph-subgraph contrast is also critical in detecting graph anomalies, and design it as follows.

\textbf{Data Augmentation Module} They consider a single graph augmentation operation, namely Edge Modification that removes edges in the adjacency matrix as well as add the same number of edges. Concretely, they fix a proportion $P$, and then uniformly and randomly sample $\frac{P\cdot M}{2}$ edges from a total of $M$ edges to remove. Meanwhile, $\frac{P\cdot M}{2}$ edges are added into the adjacency matrix. Overall, they result in one critical HP in graph augmentation, namely $P$.

\textbf{Contrast Learning Module} They consider three contrast aspects: 1) node-node contrast within each view, leading to loss term $\mathcal{L}_{NN}$), 2) node-subgraph contrast within each view, leading to loss term $\mathcal{L}_{NS}$, and 3) subgraph-subgraph contrast between original view and augmented view, leading to loss term $\mathcal{L}_{SS}$. On this basis, they combine these loss terms as $$\mathcal{L} = (1-\beta)\mathcal{L}_{NN}+\beta\mathcal{L}_{NS}+\gamma\mathcal{L}_{SS},$$ where $\beta,\gamma \in (0,1)$ are trade-off HPs. More, $\mathcal{L}_{NN} = \alpha\mathcal{L}_{NN,1} + (1-\alpha)\mathcal{L}_{NN,2}$, and $\mathcal{L}_{NS} = \alpha\mathcal{L}_{NS,1} + (1-\alpha)\mathcal{L}_{NS,2}$, with $\mathcal{L}_{NN,1}$ and $\mathcal{L}_{NN,2}$ being the loss term in the first and second views respectively. Overall, they result in three critical HPs in graph contrast, namely the combination weights $\alpha, \beta, \gamma$.

\textbf{HPs Sensitivity \& Tuning} In their ablation studies, 1) they compared four different graph augmentation strategies, including Gaussian Noise Feature, Feature Masking, Graph Diffusion, and Edge Modification, and they found that Edge Modification performs the best across different datasets (with ground-truth labels on test data to measure the performance); 2) with the help of ground-truth label information on test data, they heuristically set $(\alpha,\beta)$ as $(0.9,0.3), (0.1,0.7), (0.7,0.1), $ $(0.9,0.3), (0.7,0.5), (0.5,0.5)$ on EAT, WebKB, UAT, Cora, UAI2010, and Citation respectively; 3) similarly, they set $\gamma=1$ for all datasets; and 4) they also heuristically set $P=0.2$ for all datasets.

\subsection{SL-GAD} Different from CoLA, ANEMONE and GRADATE, SL-GAD \citep{zheng2021generative} combines the \textit{contrastive-based} framework and the \textit{generative-based} framework for unsupervised GAD.

First, the design of the \textit{contrastive-based} framework  is as follows.

\textbf{Contrastive Framework---Data Augmentation Module}
They consider a single graph augmentation operation, namely Random Ego-Nets generation with a fixed size $K$. Specifically, taking the target node as the center, they employ RWR \citep{tong2006fast} to generate two different subgraphs as ego-nets with a fixed size $K$, where $K$ controls the radius of the surrounding contexts. Overall, they result in one critical HP in graph augmentation, namely $K$. Particularly, they indicate that other augmentation strategies such as attribute masking and edge modification may introduce extra anomalies, while random ego-nets and graph diffusion can augment data without changing the underlying graph semantic information.

\textbf{Contrastive Framework---Contrast Learning Module} They introduce a Multi-View Contrastive Learning module that compare the similarity between node embedding and embedding of sampled sub-graphs in augmented views (namely node-subgraph contrast), leading to two loss terms $\mathcal{L}_{con,1}$ and $\mathcal{L}_{con,2}$ corresponding to two augmented views, respectively. On this basis, they obtain the contrastive objective $\mathcal{L}_{con} = \frac{1}{2}(\mathcal{L}_{con,1}+\mathcal{L}_{con,2})$, which combines the two loss terms with equal weights.

Second,  the \textit{generative-based} framework  is designed as follows.

\textbf{Generative Framework} They introduce a Generative Attribute Regression module that reconstructs node attributes, with the aim to achieve node-level discrimination, where the encoder is a GCN and the decoder is another GCN. Specifically, they minimize the Mean Square Error between the target node's original and reconstructed attributes in augmented views, leading to two loss terms $\mathcal{L}_{gen,1}$ and $\mathcal{L}_{gen,2}$ corresponding to two augmented views, respectively. Then they combine them with equal weights, leading to the generative objective $\mathcal{L}_{gen} = \frac{1}{2}(\mathcal{L}_{gen,1}+\mathcal{L}_{gen,2})$.

At last, their final optimization objective is defined as  follows:
$$\mathcal{L} = \alpha\mathcal{L}_{con}+\beta\mathcal{L}_{gen},$$
where $\alpha,\beta \in (0,1]$ are trade-off HPs to balance the importance of two SSL objectives.

\textbf{HPs Sensitivity \& Tuning } They conducted sensitive analysis and found that: 1) the performance first increases and then decreases with the increasing of $K$. For efficiency consideration, they heuristically set the sampled subgraph size $K=4$ for all datasets; 2) they heuristically fix $\alpha = 1$ for all datasets as they found that this achieves good performance on most datasets (with the help of label information); and 3) the selection of $\beta$ is high dependent on the specific dataset. Hence, they ``fine-tune" the value of $\beta$ for each dataset via selecting $\beta$ from $\{0.2, 0.4, 0.6, 0.8, 1.0\}$ with labels.

\subsection{Sub-CR} Similar to SL-GAD, Sub-CR \citep{zhang2022reconstruction} also combines the \textit{contrastive-based} framework and the \textit{generative-based} framework for unsupervised GAD.

First, the design of the \textit{contrastive-based} framework  is as follows.

\textbf{Contrastive Framework---Contrast Learning Module} They consider two types of data augmentation: 1) subgraph sampling to obtain local augmented views for each node (so-called local view subgraph), 2) graph diffusion plus subgraph sampling (in a sequential order) to obtain global augmented views for each node (so-called global view subgraph). Particularly, they employ RWR \citep{tong2006fast} to generate a sub-graph with a fixed size $K$ in subgraph sampling. Besides, they apply Persnonalized PageRank to power the graph diffusion \citep{zhang2023survey}, wherein the teleport probability $\alpha$ needs to be determined. Overall, they result in two critical HPs in graph augmentation, namely $K$ and $\alpha$.

\textbf{Contrastive Framework---Contrast Learning Module} This module consists of: 1) intra-view contrastive learning that maximizes the agreement between the node and its sub-graph level representations in the local view (with loss term $\mathbf{L}_{intra,1}$), and the agreement between the node and its sub-graph level representations in the global view (with loss term $\mathbf{L}_{intra,2}$), where they combine the local view and global view loss terms with equal weights to obtain the intra-view loss term $\mathbf{L}_{intra} = \mathbf{L}_{intra,1} + \mathbf{L}_{intra,2}$; and 2) inter-view contrastive learning that makes closer the discriminative scores of node-subgraph pairs in local view and global view, leading to the loss term $\mathbf{L}_{inter}$. On this basis, they combine the intra-view loss term and  inter-view loss term with equal weights to obtain the multi-view contrastive learning loss term $\mathbf{L}_{con} = \mathbf{L}_{intra} + \mathbf{L}_{inter}$. 

Second,  the \textit{generative-based} framework  is designed as follows.

\textbf{Generative Framework} They introduce a masked Autoencoder-based Reconstruction module, where the encoder is a GCN and the decoder is a multilayer perceptron with \textit{PReLU} activation function, aiming to reconstruct the attributes of the target node based on the attributes of neighboring nodes in the local view (with loss term $\mathbf{L}_{res,1}$), and in the global view (with loss term $\mathbf{L}_{res,2}$). Next, they combine the local view and global view loss terms with equal weights to obtain the overall reconstruction loss term $\mathbf{L}_{res} = \mathbf{L}_{res,1} + \mathbf{L}_{res,2}$ for each node.

At last, their final optimisation objective is defined as  follows:
$$\mathcal{L} = \mathcal{L}_{con}+\gamma\mathcal{L}_{res},$$
where $\gamma \in (0,1]$ is the trade-off HP to balance the importance of two different SSL objectives.

\textbf{HPs Sensitivity \& Tuning} They conducted sensitive analysis and found that: 1) the selection of $K$ is dependent on the specific dataset. However, for efficiency and performance consideration, they heuristically set the sampled subgraph size $K=4$ for all datasets; 2) they did not discuss the setting of teleport probability $\alpha$; and 3) they claim that most datasets are not sensitive to the value of $\gamma$ when $\gamma>0.4$. Hence, they heuristically set $\gamma = 0.6$ for Cora, Citeseer, Flickr, and BlogCatalog while $\gamma = 0.4$ for PubMed with the help of label information.

\subsection{CONAD} Similar to SL-GAD and Sub-CR, CONAD \citep{xu2022contrastive} also combines the \textit{contrastive-based} framework and the \textit{generative-based} framework for unsupervised GAD.

First, the design of the \textit{contrastive-based} framework  is as follows.

\textbf{Contrastive Framework---Data Augmentation Module} They consider four different types of data augmentations, with each type of data augmentation operation corresponding to a specific type of node anomaly. They include 1) edge adding augmentation that connects a node with many other non-connected nodes (structure - high degree), 2) edge removing augmentation that removes most edges of a node (structure - outlying); 3) attribute replacement augmentation that replaces the target node's attributes with another dissimilar node's attributes (attribute - deviated), and 4) attribute scaling augmentation that scales the target node's attributes to much larger or smaller values (attribute - disproportionate); This leads to four HPs $p_{1},p_{2},p_{3},p_{4}$, which represent the sampling probability of each augmentation strategy. Moreover, the rate $r$ of augmented anomalies (namely modified nodes) is also a HP.

\textbf{Contrastive Framework---Contrast Learning Module} They consider two different contrast strategies: 1) Siamese contrast $\mathcal{L}_{SC} = \sum_{i \in \mathbf{NM}} d(\mathbf{z}_{i},\hat{\mathbf{z}}_{i}) +\sum_{j \in \mathbf{MM}} max\{0, m-d(\mathbf{z}_{j},\hat{\mathbf{z}}_{j})\}$ where $d(\mathbf{z}_{i},\hat{\mathbf{z}}_{i})$ is the distance between embeddings of node $i$ in the original view and in the augmented view. $\mathbf{MM}$ and $\mathbf{NM}$ mean the node is modified or non-modified, respectively; 2) Triplet contrast $\mathcal{L}_{TC} = \sum max\{0, m-[d(\mathbf{z}_{i},\hat{\mathbf{z}}_{j})-d(\mathbf{z}_{i},\mathbf{z}_{j})]\}$ where $d(\mathbf{z}_{i},\mathbf{z}_{j})$ is the distance between embeddings of node $i$ and its neighbor $j$ in the original view, and $d(\mathbf{z}_{i},\hat{\mathbf{z}}_{j})$ is the distance between embeddings of node $i$ in the original view and its neighbor $j$ in the augmented view. Particularly, the contrastive loss term $\mathcal{L}_{Contr} = \mathcal{L}_{SC}$ or $\mathcal{L}_{Contr} = \mathcal{L}_{TC}$. This module contains a HP, namely the margin $m$.

Second,  the \textit{generative-based} framework  is designed as follows.

\textbf{Generative Framework} This framework consists of two components: 1) an attribute autoencoder to reconstruct the node attributes, where the encoder is a GAT \citep{velivckovic2017graph} and the decoder is another GAT. This leads to the loss term $L_{A}$; and 2) a structure autoencoder to reconstruct the structure, where the encoder is a GAT and the decoder is a dot product operation followed by a \textit{sigmoid} function (namely $sigmoid(\mathbf{z}^{t}\mathbf{z})$). This leads to the loss term $L_{S}$. Combining these two loss terms leads to a loss term $L_{Recon} = \lambda L_{A}+ (1-\lambda) L_{S}$, where $\lambda \in (0,1)$  is a trade-off HP to balance the two reconstruction errors. Unlike SL-GAD and Sub-CR, CONAD requires the whole adjacency matrix and node attribute matrix as input, and thus it can reconstruct the graph structure, making it unsuitable to large graphs. In contrast, SL-GAD and Sub-CR only require subgraphs as inputs, and thus are unable to perform structure reconstruction while being scalable.

At last, the final optimization objective is defined as  follows:
$$\mathcal{L} = \eta\mathcal{L}_{Contr}+(1-\eta)\mathcal{L}_{Recon},$$
where $\eta \in (0,1)$ is the trade-off HP to balance the importance of two SSL objectives.

\textbf{HPs Sensitivity \& Tuning} They did not perform sensitivity analysis over the HPs. Instead,
1) They heuristically set the ration of augmented anomalies  $r=0.1$ and $r=0.2$ for small and large datasets, respectively; 2) The sampling probability of each augmentation strategy is set to $p_{i}=0.25$ for $i \in \{1,2,3,4\}$; 3) They heuristically set the margin $m=0.5$ for all datasets; and 4) They heuristically set the trade-off hyper-parameters $\lambda=0.9$ and  $\eta=0.7$ for all datasets

\subsection{DOMINANT} DOMINANT \citep{ding2019deep} is arguably the first work that utilizes \textit{generative-based} framework and GNNs to perform unsupervised anomaly detection on attribute graphs.

\textbf{Generative Framework} They first employ GCN \citep{kipf2016semi} to obtain node embeddings. Next, they construct two decoders: 1) an attribute decoder, which consists of another GCN, to reconstruct the node attributes, leading to the loss term $L_{A}$, and 2) a structure decoder, which is a dot product operation followed by a \textit{sigmoid} function (namely $sigmoid(\mathbf{z}^{t}\mathbf{z})$), to reconstruct topological structures, leading to the loss term $L_{S}$.

At last, their final optimization objective is defined as  follows:
$$\mathcal{L} = \alpha\mathcal{L}_{A}+(1-\alpha)\mathcal{L}_{S},$$
where $\alpha \in (0,1)$ is the trade-off HP to balance the importance of two objectives.

\textbf{HPs Sensitivity \& Tuning} Specifically, they found that the AUC performance usually increases first and then decreases with the increasing of $\alpha$. However, the specific value of $\alpha$ on each dataset is heuristically selected with the help of labels. The HP $\alpha$ is selected from $[0.4,0.7], [0.4,0.7], [0.5,0.8]$ on BlogCatalog, Flickr, and ACM respectively.

\subsection{AnomalyDAE} Similar to DOMINANT, AnomalyDAE \citep{fan2020anomalydae} leverages \textit{generative-based} framework and autoencoders (based on GNNs) to perform unsupervised GAD.

\textbf{Generative Framework} AnomalyDAE consists of two components: 1) an attribute autoencoder to reconstruct the node attributes, where the encoder consists of two non-linear feature transform
layers and the decoder is simply a dot product operation. This leads to the loss term $L_{A}$, and $L_{A}$ is associated with a penalty HP $\eta >1$); and 2) a structure autoencoder to reconstruct the structures, where the encoder is based GAT \citep{velivckovic2017graph} and the decoder is a dot product operation followed by a \textit{sigmoid} function (namely $sigmoid(\mathbf{z}^{t}\mathbf{z})$). This leads to the loss term $L_{S}$, and $L_{S}$ is associated with a penalty HP $\theta >1$.

At last, their final optimization objective is defined as  follows:
$$\mathcal{L} = \alpha\mathcal{L}_{S}+(1-\alpha)\mathcal{L}_{A},$$
where $\alpha \in (0,1)$ is the trade-off HP to balance the importance of two objectives.

\textbf{HPs Sensitivity \& Tuning} Specifically, they found that the AUC performance usually increases first and then decreases with the increasing of $\alpha$. However, the specific value of $\alpha$ on each dataset is selected using label information. The HPs $(\alpha, \eta, \theta)$ are heuristically set as $(0.7,5,40)$, $(0.9,8,90)$, $(0.7,8,10)$ on BlogCatalog, Flickr, and ACM respectively.

\subsection{GUIDE} Similar to AnomalyDAE, GUIDE \citep{yuan2021higher} leverages \textit{generative-based} framework and autoencoders (based on GNNs) to perform unsupervised GAD. Particularly, they consider reconstructing the high-order structures.

\textbf{Generative Framework} GUIDE consists of two components: 1) an attribute autoencoder to reconstruct the node attributes, where the encoder is a GCN and the decoder is another GCN. This leads to  the loss term $L_{A}$; and 2) a structure autoencoder to reconstruct the high-order structures, where the encoder is a graph node
attention network based on \citep{ding2021inductive} and the decoder is another graph node attention layer. This leads to the loss term $L_{S}$. Moreover, structure matrix is composed of node motif degrees, which leads to a HP, namely the degree of motifs $D$.

At last, their final optimization objective is defined as  follows:
$$\mathcal{L} = \alpha\mathcal{L}_{A}+(1-\alpha)\mathcal{L}_{S},$$
where $\alpha \in (0,1)$ is the trade-off HP to balance the importance of two SSL objectives.

\textbf{HPs Sensitivity \& Tuning} They mention that the HPs are optimised via a parameter sensitivity analysis experiment for each dataset. Specifically, they found that: 1) the AUC performance usually increases first and then decreases with the increasing of $\alpha$, and most datasets can achieve a good performance when $0.1<\alpha<0.3$. However, the specific value of $\alpha$ on each dataset is selected using labels; and 2) they heuristically set the degree of motifs as $D=4$.

\subsection{GAAN} GAAN \citep{chen2020generative} combines  the \textit{generative-based} framework and GAN \citep{goodfellow2014generative} for unsupervised GAD. Particularly, GAN can be considered as a special case of \textit{contrastive-based} framework.

\textbf{Contrastive Framework---Data Augmentation Module}  GAAN  employs GAN, which consists of a generator and a discriminator, to generate adversarial samples as augmented views, without involving any HPs.

\textbf{Contrastive Framework---Contrastive Learning Module}  For each target node, GAAN computes the sum of cross-entropy losses of its 1-hop neighboring nodes (where the edge is considered as from real distribution by the discriminator) as anomaly score, leading to a loss term $\mathcal{L}_{D}$. In particular, this discriminator loss can be regarded as contrastive loss, and it considers both node attributes and graph structures.

\textbf{Generative Framework} GAAN utilizes the generator to reconstruct the node attribute, and employs the reconstruction error to compute anomaly score, leading to a loss term $\mathcal{L}_{G}$.

At last, their final optimisation objective is defined as
$$\mathcal{L} = \alpha\mathcal{L}_{G}+(1-\alpha)\mathcal{L}_{D},$$
where $\alpha \in [0,1]$ is the trade-off HP to balance the importance of two objectives

\textbf{HPs Sensitivity \& Tuning} Specifically, they found that the AUC performance usually increases first and then decreases with the increasing of $\alpha$. However, the specific value of $\alpha$ on each dataset is selected using label information. The HP $\alpha$ is heuristically set as $0.2, 0.3, 0.1$ on BlogCatalog, Flickr, and ACM respectively.

\section{Performance Variations under Different HP Settings}
\label{appendix:FurtherAnalysis}
In this section, we present a comprehensive analysis of the performance exhibited by various semi-supervised learning (SSL) based graph anomaly detection techniques. This evaluation encompasses an extensive array of hyperparameter (HP) configurations and is conducted across multiple benchmark datasets. 

Specifically, the results for GAAN \citep{chen2020generative} is provided in Figure~\ref{Fig:ROC_Var_GAAN}, from which we can see huge performance variations under different HP settings. For example, the AUC value can vary from $0.474$ to $0.747$ if one utilizes different HP configurations on dataset CiteSeer (namely by changing the HP $\alpha$ from $0.5$ to $0$). Moreover, the results for CoLA \citep{liu2021anomaly} is provided in Figure~\ref{Fig:ROC_Var_CoLA}. Compared to GAAN, CoLA is less sensitive to the setting of HPs, while we can still see moderate performance variations on some datasets (e.g., from $0.693$ to $0.733$ on Flickr, and from $0.767$ to $0.795$ on ACM). Besides, Figure~\ref{Fig:ROC_Var_DOMINANT} shows that DOMINANT is also sensitive to HPs except for the cases where the algorithm is largely underfitted (i.e., on ACM, Flickr and BlogCatalog the loss values change only by $10^{-2}$ after 400 epochs of training).

Particularly, AnomalyDAE and SL-GAD are very sensitive to HPs as shown in Figures~\ref{Fig:ROC_Var_AnomalyDAE} and ~\ref{Fig:ROC_Var_SL-GAD}. For example, the performance of AnomalyDAE ranges from $0.702$ to $0.941$ on CiteSeer, and the performance of SL-GAD vary from  $0.787$ to $0.920$. As shown in Figure~\ref{Fig:ROC_Var_CONAD}, CONAD shows similar behaviors except for the cases where CONAD is largely underfitted (namely on ACM) or suffers from OOM errors (namely on Flickr and BlogCatalog). The analysis for GUIDE in Figure~\ref{Fig:ROC_Var_GUIDE}, GRADATE in Figure~\ref{Fig:ROC_Var_GRADATE}, and Sub-CR in Figure~\ref{Fig:ROC_Var_Sub-CR}
is similar and conveys the same issues.

\begin{figure}
\centering
\includegraphics[width=13cm]{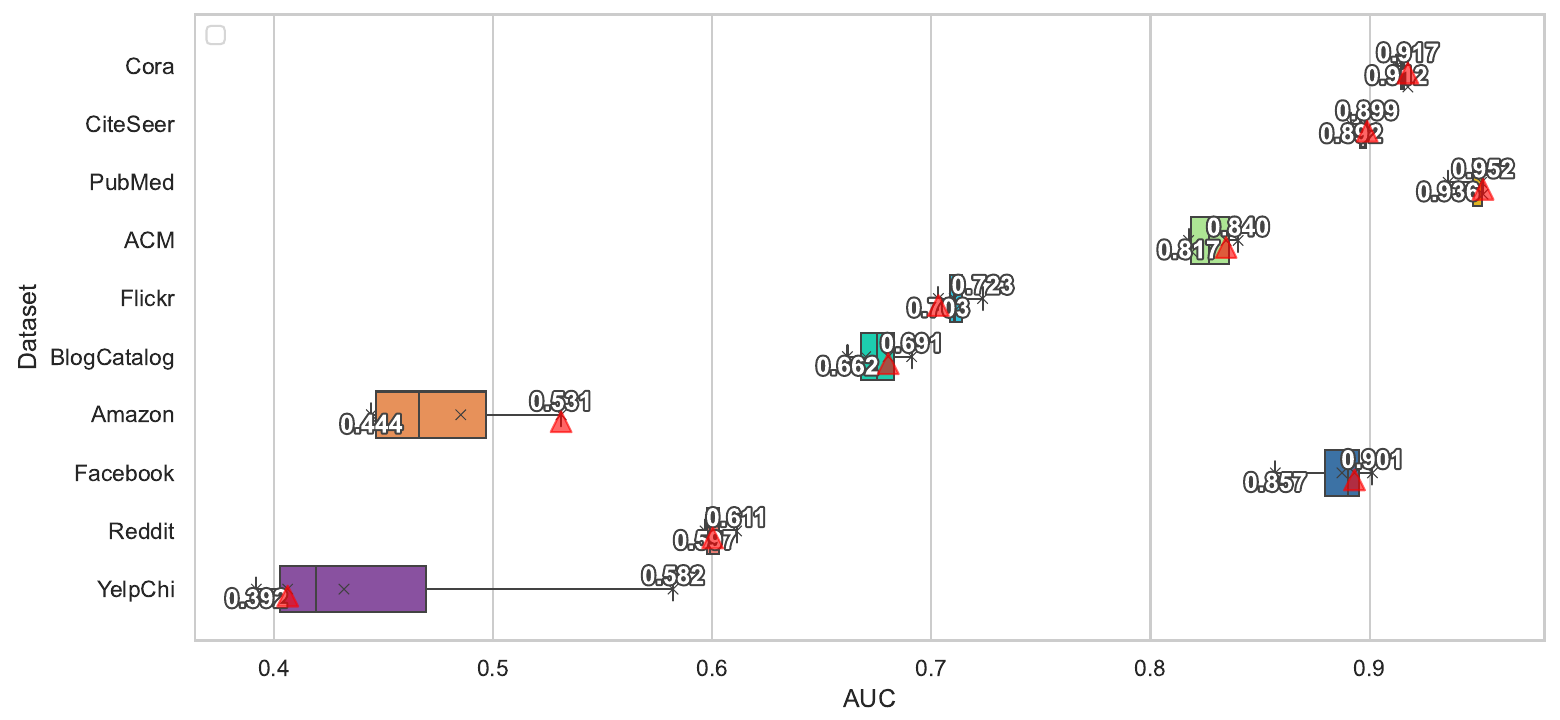}
\caption{Performance variations over different HP configurations for CoLA \citep{liu2021anomaly} on different benchmark datasets. }
\label{Fig:ROC_Var_CoLA}
\end{figure}

\begin{figure}
\centering
\includegraphics[width=13cm]{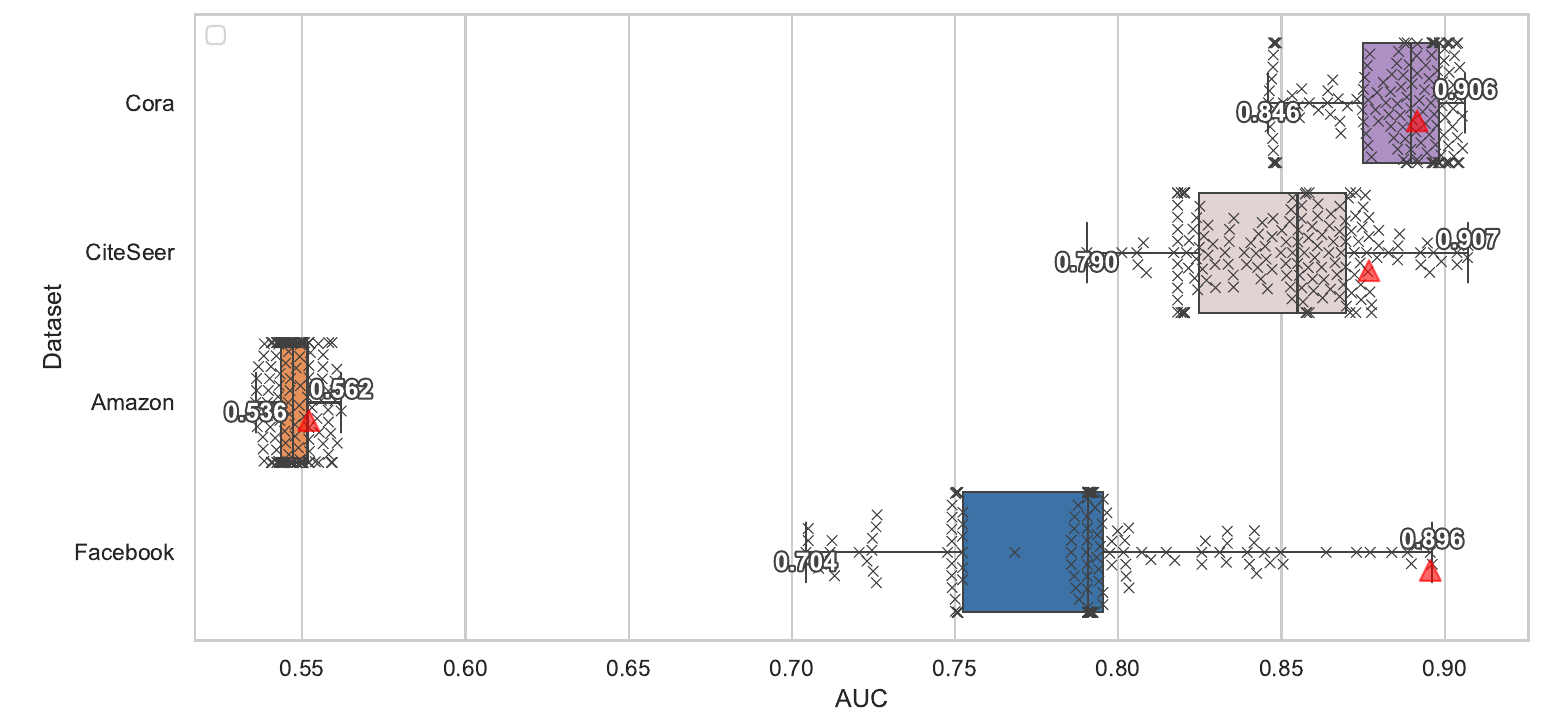}
\caption{Performance variations over different HP configurations for GRADATE \citep{duan2023graph} on different benchmark datasets. }
\label{Fig:ROC_Var_GRADATE}
\end{figure}

\begin{figure}
\centering
\includegraphics[width=13cm]{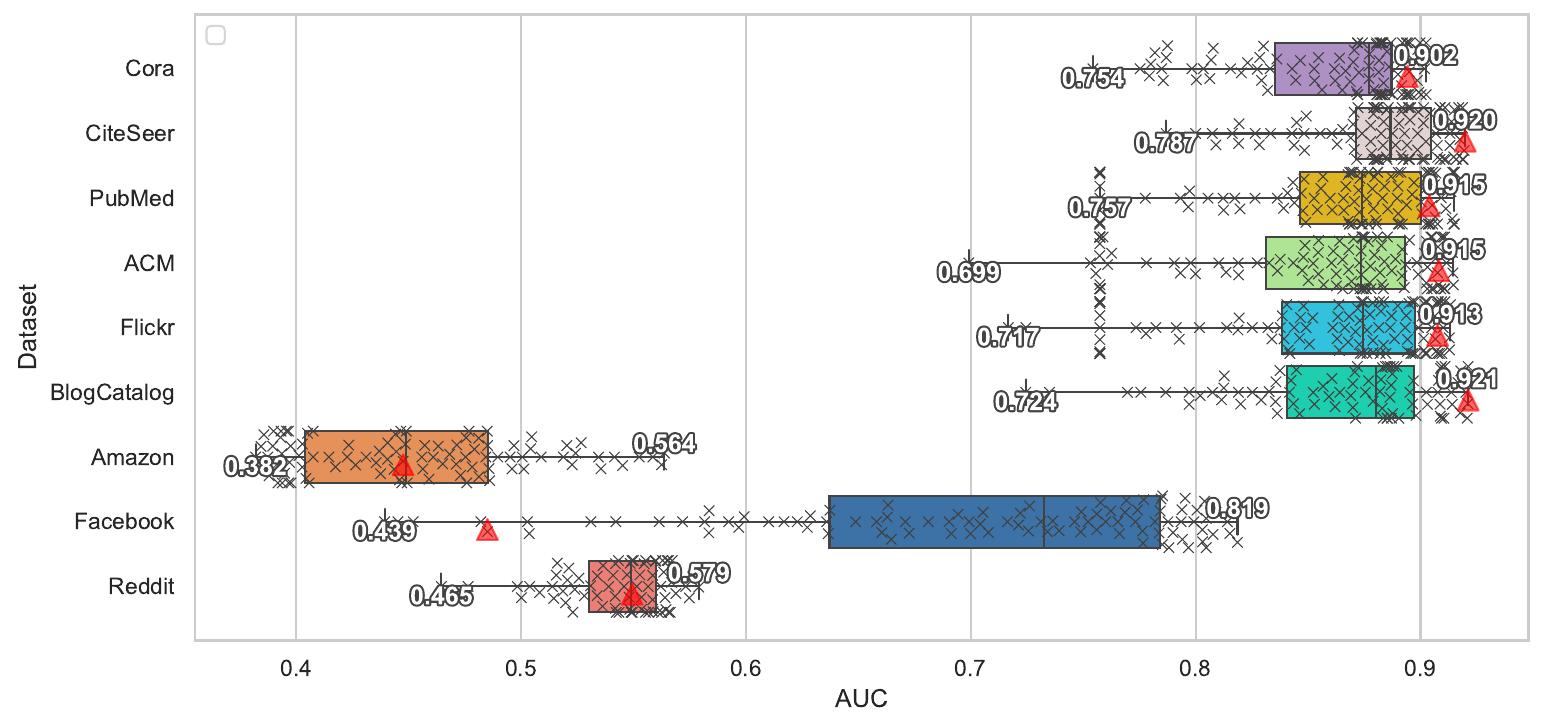}
\caption{Performance variations over different HP configurations for SL-GAD \citep{zheng2021generative} on different benchmark datasets. }
\label{Fig:ROC_Var_SL-GAD}
\end{figure}

\begin{figure}
\centering
\includegraphics[width=13cm]{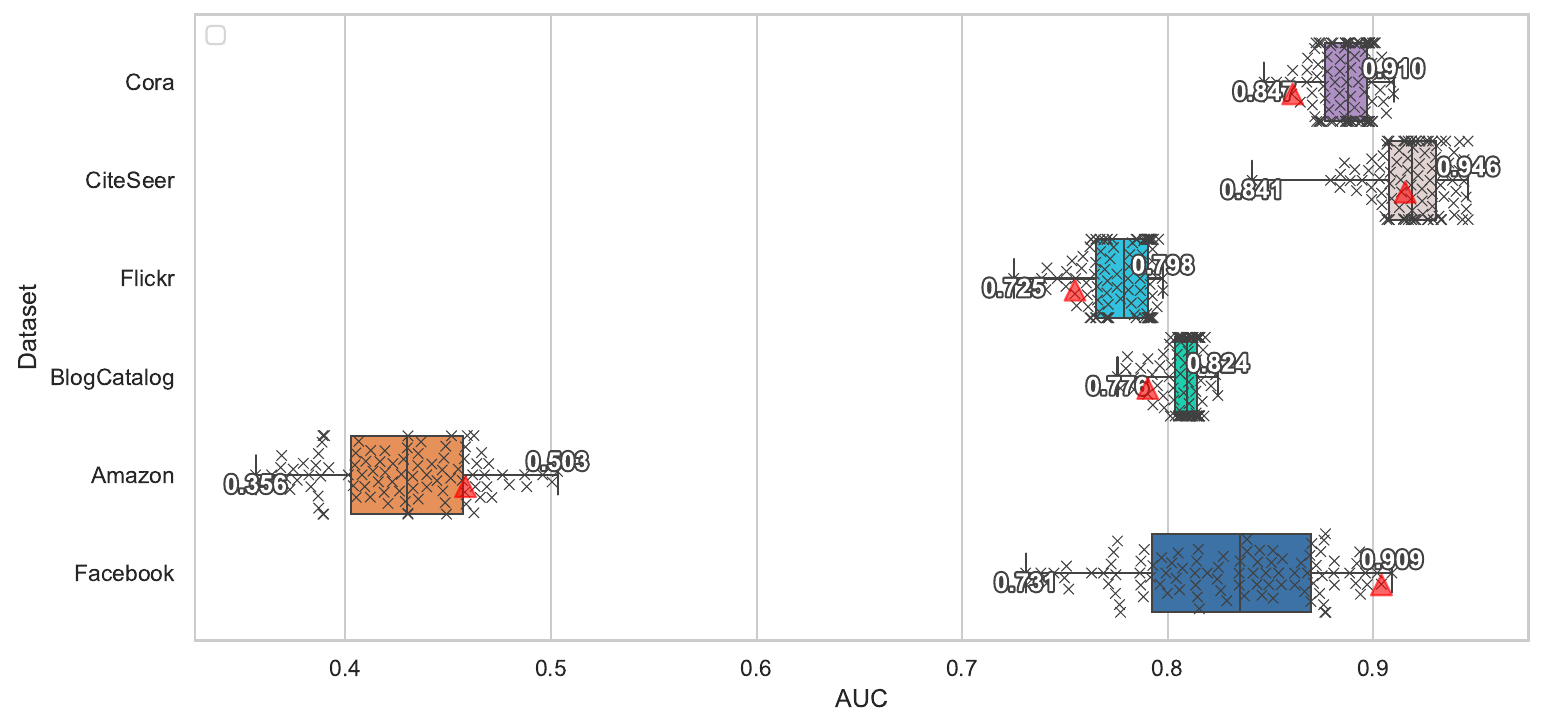}
\caption{Performance variations over different HP configurations for Sub-CR \citep{zhang2022reconstruction} on different benchmark datasets. }
\label{Fig:ROC_Var_Sub-CR}
\end{figure}

\begin{figure}
\centering
\includegraphics[width=13cm]{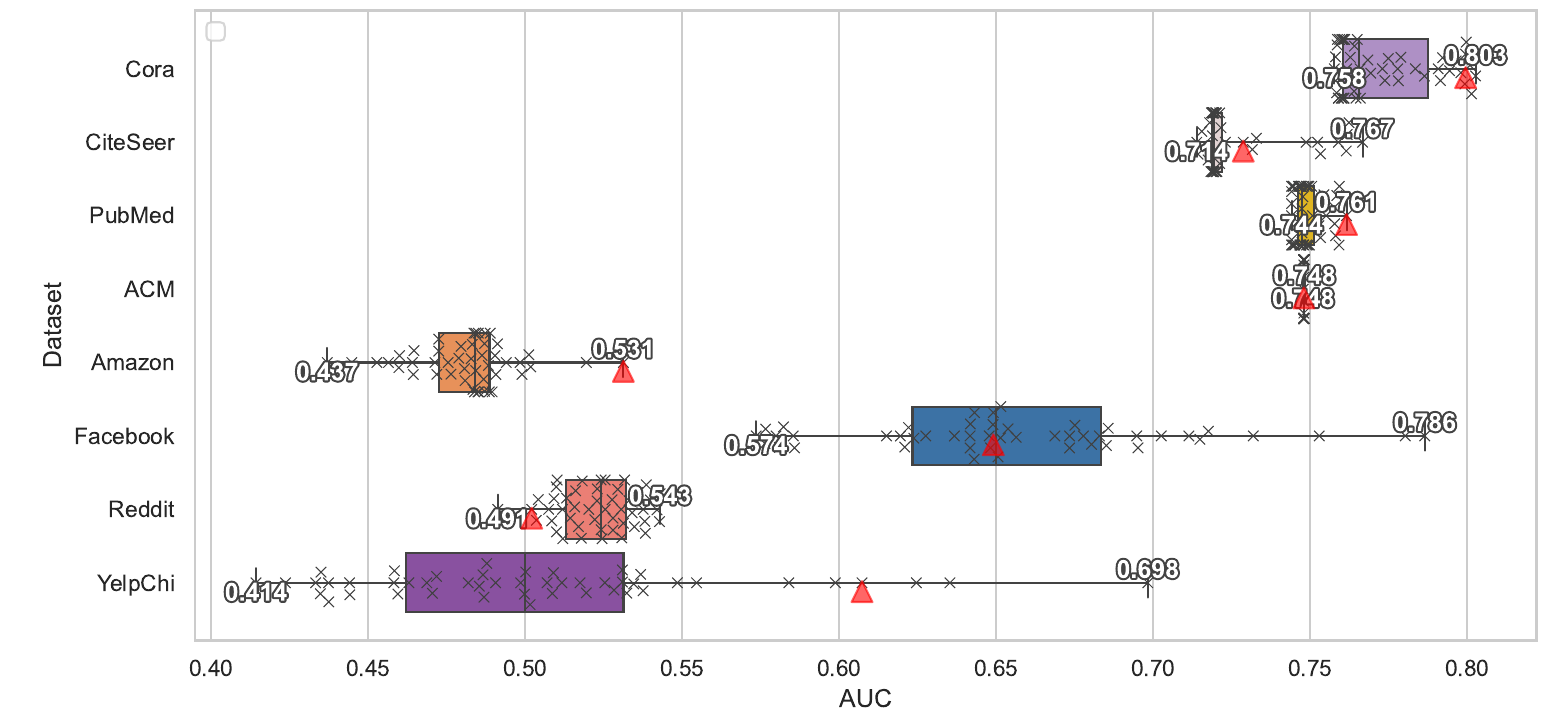}
\caption{Performance variations over different HP configurations for CONAD \citep{xu2022contrastive} on different benchmark datasets. }
\label{Fig:ROC_Var_CONAD}
\end{figure}

\begin{figure}
\centering
\includegraphics[width=13cm]{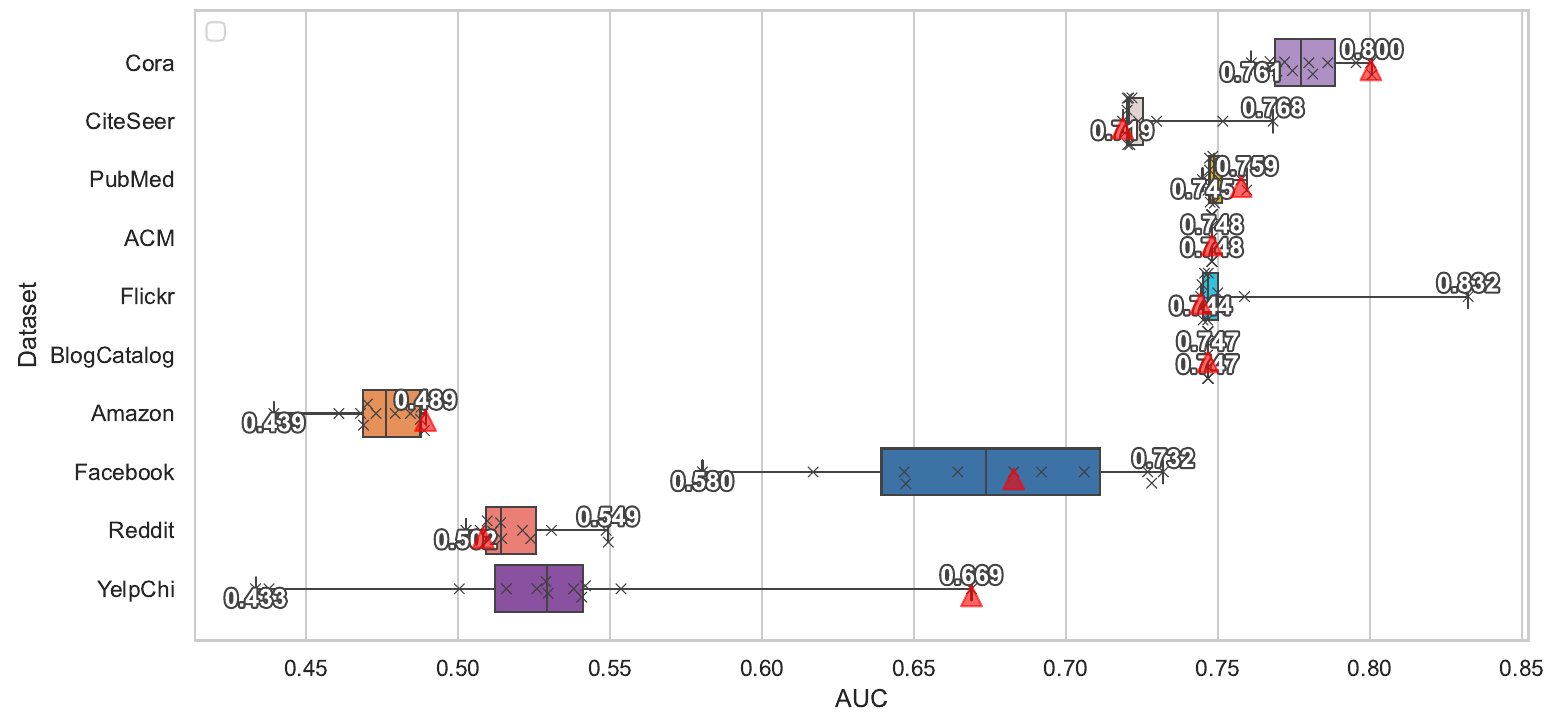}
\caption{Performance variations over different HP configurations for DOMINANT \citep{ding2019deep} on different benchmark datasets. }
\label{Fig:ROC_Var_DOMINANT}
\end{figure}

\begin{figure}[H]
\centering
\includegraphics[width=13cm]{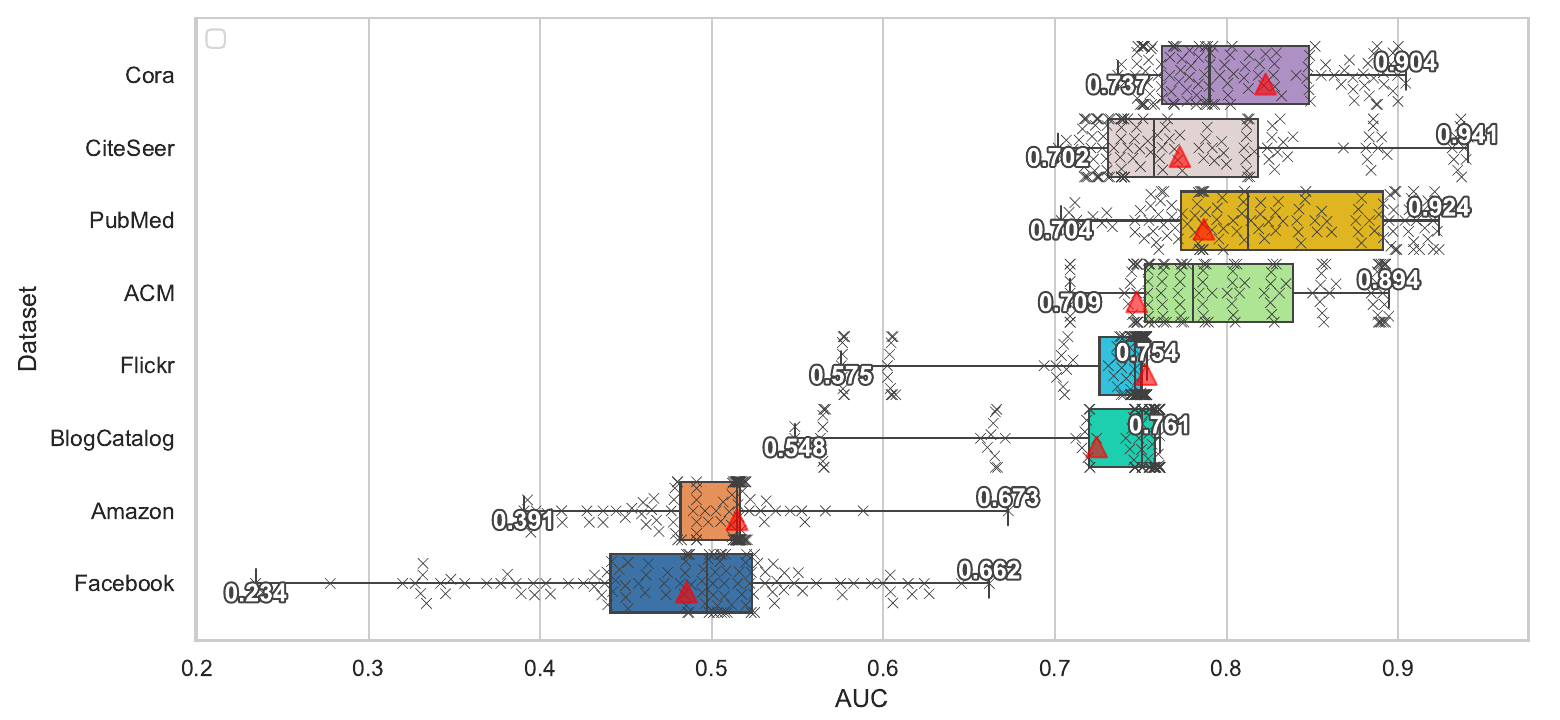}
\caption{Performance variations over different HP configurations for AnomalyDAE \citep{fan2020anomalydae} on different benchmark datasets. }
\label{Fig:ROC_Var_AnomalyDAE}
\end{figure}

\begin{figure}
\centering
\includegraphics[width=13cm]{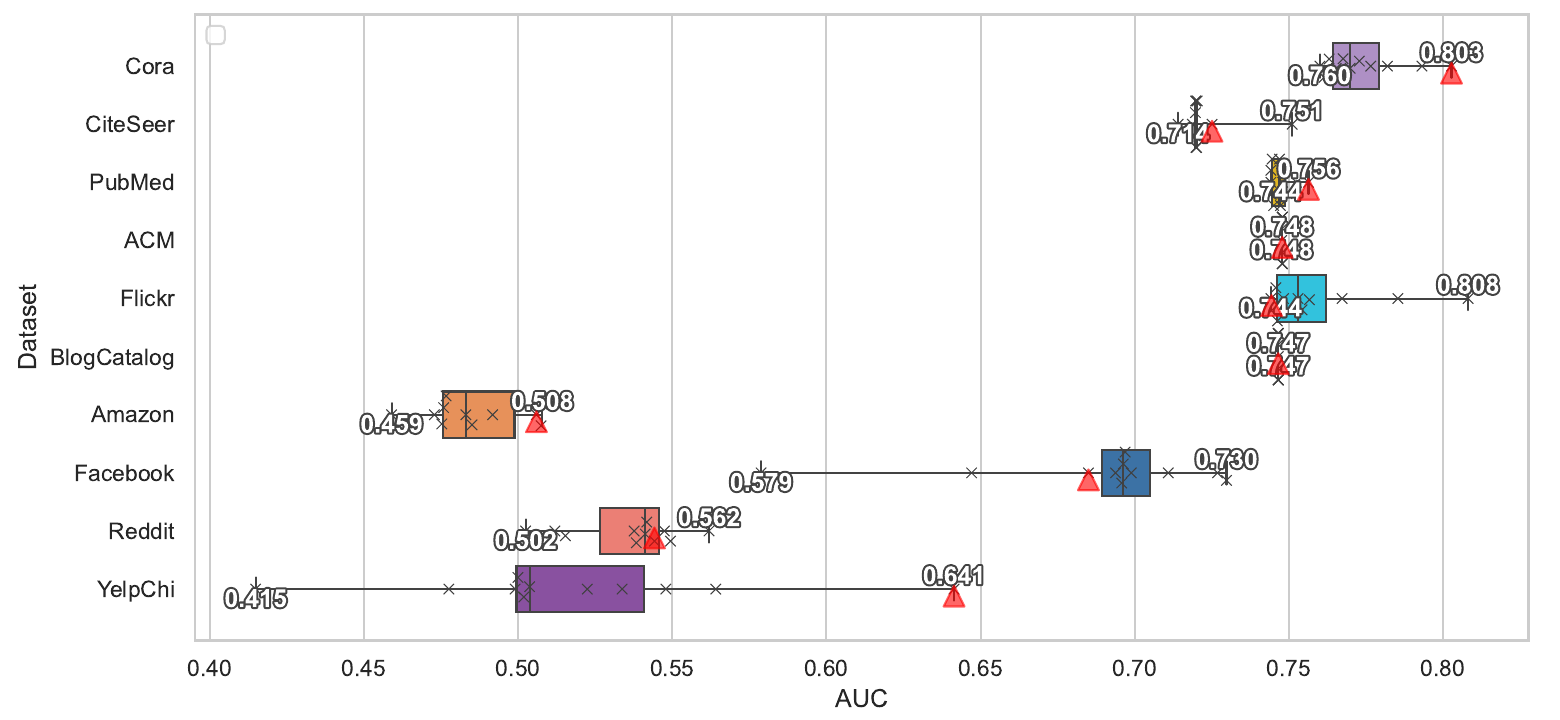}
\caption{Performance variations over different HP configurations for GUIDE \citep{yuan2021higher} on different benchmark datasets. }
\label{Fig:ROC_Var_GUIDE}
\end{figure}

\begin{figure}
\centering
\includegraphics[width=13cm]{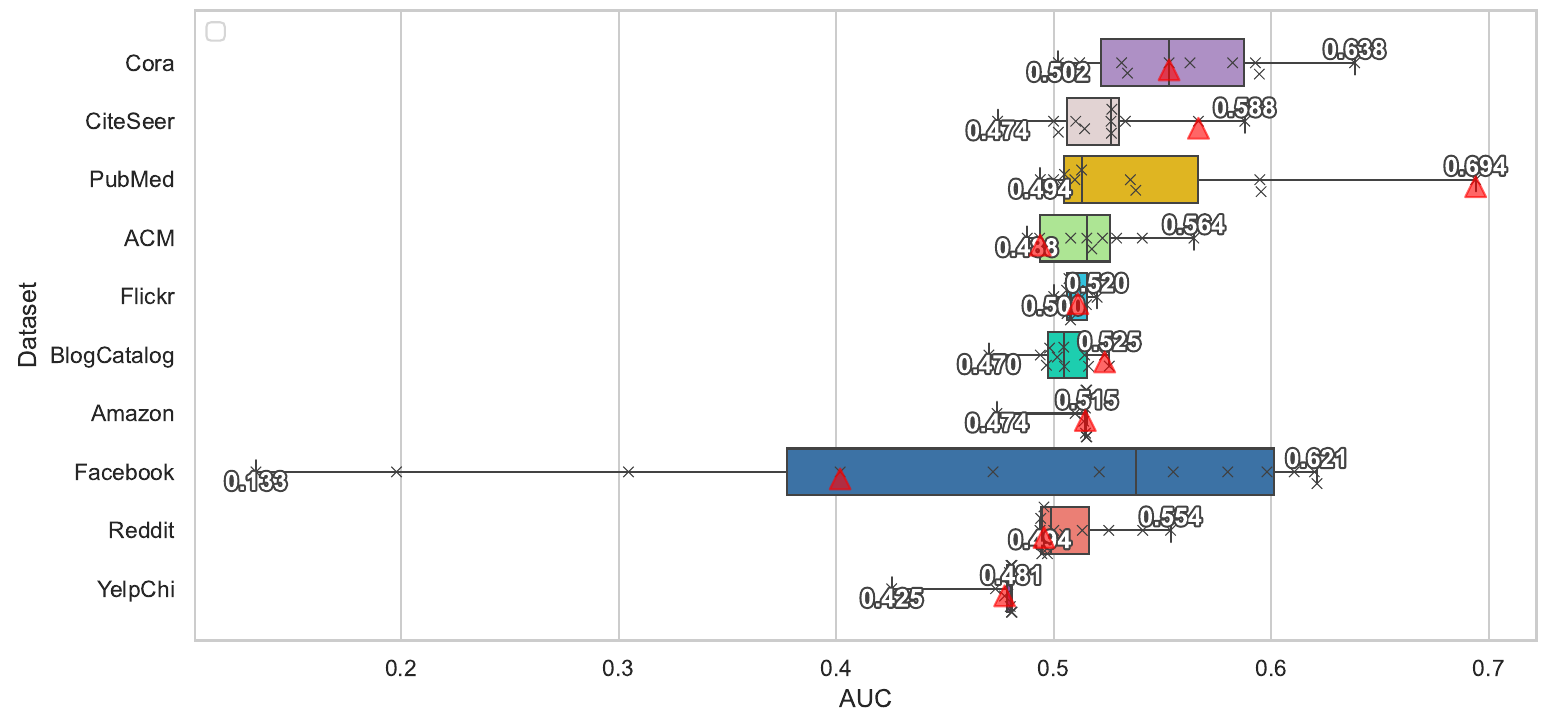}
\caption{Performance variations over different HP configurations for GAAN \citep{chen2020generative} on different benchmark datasets.
}
\label{Fig:ROC_Var_GAAN}
\end{figure}

\section{Similar Observations in Other Papers}
\label{app:SimilarObservations}
\cite{liu2022bond} conduct 
a comprehensive benchmark for unsupervised graph anomaly detection. From their results (note that their experiment setting is slightly different from ours), we can have similar observations as follows by comparing the average AUC vs max AUC:
\begin{itemize}
    \item \textbf{Radar} \citep{li2017radar} is not sensitive to hyper-parameters (0.65 VS 0.66 on Cora, 0.99 VS 0.99 on Weibo, 0.55 VS 0.57 on Reddit, 0.52 VS 0.52 on Disney, 0.53 VS 0.53 on Books), but it will suffer from OOM errors for large graphs;
    \item \textbf{ANOMALOUS} \citep{peng2018anomalous} is very sensitive to hyper-parameters on some datasets (\underline{0.55 VS 0.68 on Cora}, 0.99 VS 0.99 on Weibo, \underline{0.55 VS 0.60 on Reddit}, 0.52 VS 0.52 on Disney, 0.53 VS 0.53 on Books), and it will suffer from OOM errors for large graphs; 
    \item  \textbf{DOMINANT} \citep{ding2019deep} is very sensitive to hyper-parameters on some datasets (0.83 VS 0.84 on Cora, \underline{0.76 VS 0.85 on Flickr}, \underline{0.85 VS 0.93 on Weibo}, \underline{0.50 VS 0.58 on Books}, 0.56 VS 0.56 on Reddit, \underline{0.47 VS 0.55 on Disney})
    \item  \textbf{AnomalyDAE} \citep{fan2020anomalydae} is very sensitive to hyper-parameters on some datasets (0.83 VS 0.85 on Cora, \underline{0.86 VS 0.91} on Amazon, 0.66 VS 0.70 on Flickr, 0.91 VS 0.93 on Weibo, 0.56 VS 0.56 on Reddit, \underline{0.49 VS 0.55 on Disney}, \underline{0.54 VS 0.69 on Books});
    \item \textbf{GUIDE} \citep{yuan2021higher} is very sensitive to hyper-parameters on some datasets (\underline{0.39 VS 0.53 on Disney}, \underline{0.52 VS 0.63 on Books}, 0.75 VS 0.78 on Cora), and it will suffer from OOM errors on large graph (including Amazon, Flickr, Weibo, Reddit). It needs much time and memory for training as it employs a graph motif counting algorithm to extract structural information;
    \item \textbf{CONAD} \citep{xu2022contrastive} is very sensitive to hyper-parameters on some datasets (\underline{0.79 VS 0.84 on Cora}, 0.81 VS 0.82 on Amazon, 0.65 VS 0.67 on Flickr, \underline{0.85 VS 0.93 on Weibo}, 0.56 VS 0.56 on Reddit, \underline{0.48 VS 0.53 on Disney}, \underline{0.52 VS 0.63 on Books}). 
\end{itemize}

\section{Summary of existing SSL-based graph anomaly detection methods}
Existing SSL-based graph anomaly detection methods are summarized in Table~\ref{tab:algo_summary}, which includes the datasets used to test, the core principles of SSL techniques, the involved hyper-parameters (only SSL related ones), and their public implementations.

\begin{table*}
  \caption{SSL-related HPs for different algorithms, where ``\textbf{Range}" indicates the tested values in grid search. 
  }
  \label{tab:HP_search}
  \centering
  \resizebox{\textwidth}{!}{
  \begin{tabular}{p{5cm}p{2cm}p{9cm}}
    \toprule
    \textbf{Algo} & \textbf{HPs} & \textbf{Range} \\
    \midrule
    \multirow{2}{*}{ANEMONE \citep{jin2021anemone}} & $K$ & \{2, 3, 4, 5\}\\
    & $\alpha$ & \{0, 0.01, 0.1, 0.2, 0.3, 0.4, 0.5, 0.6, 0.7, 0.8, 0.9, 0.99, 1\}\\
    \hline
    \multirow{3}{*}{AnomalyDAE \cite{fan2020anomalydae}} & $\alpha$& \{0.01, 0.1, 0.2, 0.3, 0.4, 0.5, 0.6, 0.7, 0.8, 0.9, 0.99, 1\}\\
    & $\eta$& \{1, 2, 3, 4, 5, 6, 7, 8, 9, 10\}\\
    & $\theta$& \{10\}\\
     \hline
     CoLA \citep{liu2021anomaly} & $K$ & \{2, 3, 4, 5\}\\
    \hline
    \multirow{8}{*}{CONAD \citep{zhang2022reconstruction}} & $r$& \{0.10\}\\
    & $p1$& \{0.25\}\\
    & $p2$& \{0.25\}\\
    & $p3$& \{0.25\}\\
    & $p4$& \{0.25\}\\
    & $m$& \{0.5\}\\
    & $\lambda$&\{0.01, 0.1, 0.2, 0.3, 0.4, 0.5, 0.6, 0.7, 0.8, 0.9, 0.99, 1\}\\
    & $\eta$&\{0.01, 0.5, 0.99, 1\}\\
     \hline
     DOMINANT \citep{ding2019deep} & $\alpha$  & \{0.01, 0.1, 0.2, 0.3, 0.4, 0.5, 0.6, 0.7, 0.8, 0.9, 0.99, 1\} \\
     \hline
    \multirow{1}{*}{GAAN \citep{chen2020generative}}
    &$\alpha$& \{0, 0.01, 0.1, 0.2, 0.3, 0.4, 0.5, 0.6, 0.7, 0.8, 0.9, 0.99, 1\}\\
     \hline
    \multirow{4}{*}{GRADATE \citep{duan2023graph}} & $P$& \{0.20\}\\
    & $\alpha$& \{0.9\}\\
    & $\beta$& \{0, 0.01, 0.1, 0.2, 0.3, 0.4, 0.5, 0.6, 0.7, 0.8, 0.9, 0.99, 1\}\\
    & $\gamma$& \{0, 0.01, 0.1, 0.2, 0.3, 0.4, 0.5, 0.6, 0.7, 0.8, 0.9, 0.99, 1\}\\
    \hline
    \multirow{2}{*}{GUIDE \citep{yuan2021higher}} & $D$& \{4\}\\
    &$\alpha$& \{0.01, 0.1, 0.2, 0.3, 0.4, 0.5, 0.6, 0.7, 0.8, 0.9, 0.99\}\\
    \hline
    \multirow{3}{*}{SL-GAD \citep{zheng2021generative}} & $K$& \{2, 3, 4, 5, 6, 7, 8, 9\}\\
    & $\alpha$& \{0.01, 0.1, 0.2, 0.3, 0.4, 0.5, 0.6, 0.7, 0.8, 0.9, 0.99, 1\}\\
    & $\beta$& \{0.6\}\\
     \hline
    \multirow{3}{*}{Sub-CR \citep{zhang2022reconstruction}} & $K$&  \{2, 3, 4, 5, 6, 7, 8, 9\}\\
    & $\alpha$& \{0.01\}\\
    & $\gamma$& \{0.01, 0.1, 0.2, 0.3, 0.4, 0.5, 0.6, 0.7, 0.8, 0.9, 0.99, 1\}\\
  \bottomrule
\end{tabular}
}
\end{table*}

\begin{table*}[]
    \caption{Summary of existing SSL-based graph anomaly detection methods.}
    \label{tab:algo_summary}
    \centering
    \small
    \resizebox{\textwidth}{!}{
    \begin{tabular}{p{4.5cm}p{1.5cm}p{3.5cm}p{3cm}p{3.6cm}p{1cm}}
        \toprule
        Method & Venue & Datasets & SSL Methods & Hyperparameters & Code \\
        \midrule
        ANEMONE \citep{jin2021anemone} & CIKM'21 & Cora, Citeseer, PubMed & Node-Node CL, Node-Sub CL & Ego-Net size ($K$), Combination weights & \href{https://github.com/GRAND-Lab/ANEMONE}{{\color{blue}Github}}\\
        AnomalyDAE \citep{fan2020anomalydae} & ICASSP'20 & ACM, Flickr, BlogCatalog & Attribute Recon, Structure Recon & Penalty HPs, Combination weights & PyGOD\\
        CoLA \citep{liu2021anomaly} & TNNLS'21 & Cora, Citeseer, Pubmed, BlogCatalog, Flickr, ACM, ogbn-arxiv & Node-Sub CL & Random walk length ($K$) & \href{https://github.com/GRAND-Lab/CoLA}{{\color{blue}Github}}, PyGOD\\
        CONAD \citep{xu2022contrastive} & PAKDD'22 & Amazon, Flickr, Enron, Facebook, Twitter & Node-Sub CL, Attribute Recon, Structure Recon & Augmentation sampling probabilities, combination weights & PyGOD, \href{https://github.com/zhiming-xu/conad}{{\color{black}Github}}\\
        DOMINANT \citep{ding2019deep} & ICDM'19 & ACM, Flickr, BlogCatalog & Attribute Recon, Structure Recon & Combination weight & PyGOD\\
        GAAN \citep{chen2020generative} & CIKM'20 & ACM, Flickr, BlogCatalog & Attribute Recon, Discr Loss & Combination weight & PyGOD\\
        GRADATE \citep{duan2023graph} & AAAI'23 & EAT, WebKB, UAT, Cora, UAI2010, Citation & Node-Node CL, Node-Sub CL, Sub-Sub CL & Proportion of modified edges ($P$), Combination weights & \href{https://github.com/FelixDJC/GRADATE}{{\color{blue}Github}}\\
        GUIDE \citep{yuan2021higher} & BigData'21 & Cora, Citation, PubMed, ACM, DBLP & Attribute Recon, Structure Recon & Combination weight & PyGOD\\
        SL-GAD \citep{zheng2021generative} & TKDE'21 & Cora, Citeseer, PubMed, ACM, Flickr, BlogCatalog & Node-Sub CL, $~~~$Attribute Recon & Random walk length ($K$), Combination weights & \href{https://github.com/KimMeen/SL-GAD}{{\color{blue}Github}}\\
        Sub-CR \citep{zhang2022reconstruction} & IJCAI'22 & Cora, Citeseer, PubMed, Flickr, BlogCatalog & Node-Sub CL, $~~~$Attribute Recon & Random walk length ($K$), Teleport probability $\alpha$, Combination weights & \href{https://github.com/Zjer12/Sub}{{\color{blue}Github}}\\
        \bottomrule
    \end{tabular}
    }
\end{table*}

\section{Search Space Approximation based on SMBO}
\label{appendix:SMBO}

\subsection{Performance Surrogate Functions} Although discretization of continuous domains can largely reduce the search space, it is still computationally prohibitive to search the full discretized HP space when the number of HPs is large. Therefore, we learn a regressor $g(\cdot)$ which aims to to learn the mapping from HP settings onto the performance metric (namely the domain of $T(\cdot)$). Note that $g(\cdot)$ should be different for different combinations of graph and graph anomaly detector $[\mathcal{G},f(\cdot)]$, and we call these functions \textit{performance surrogate functions}. Gaussian Process (GP) \citep{williams1995gaussian} is one popular choice for $g(\cdot)$. Based on these \textit{performance surrogate functions}, we can identify promising HPs without running experiments on all possible HPs, which will be illustrated in next subsection.

\subsection{SMBO-based Optimization}
Particularly, we leverage Sequential Model-based Optimization (SMBO) \citep{jones1998efficient} to iteratively and efficiently identify promising HP configurations to evaluate, and finally output the optimal one as follows.  Similar idea is also explored in \cite{zhao2022toward}.

\textbf{Initialization} Specifically, we first randomly sample a small number of HPs $\boldsymbol{\lambda}_{eval} = \{\boldsymbol{\lambda}_{1},\boldsymbol{\lambda}_{2},..., \boldsymbol{\lambda}_{J}\}$ with $J \ll M$. Second, for each HP, we compute its unsupervised performance metric score ${t}(\mathcal{G})$, leading to pairs $\{(\boldsymbol{\lambda}_{1},{t}_{1}(\mathcal{G})),(\boldsymbol{\lambda}_{2},{t}_{2}(\mathcal{G})), $..., $(\boldsymbol{\lambda}_{J},{t}_{J}(\mathcal{G}))\}$. Third, we employ these pairs to train a specific \textit{performance surrogate function} $g(\cdot)$.

\textbf{Iteration} For each iteration, we leverage $g(\cdot)$ to predict the performance for a sampled HP $\boldsymbol{\lambda}_{j}$, denoted as $\eta_{j}=g(\boldsymbol{\lambda}_{j})$. Moreover, we also utilize  $g(\cdot)$ to predict the uncertainty around the prediction of $\boldsymbol{\lambda}_{j}$, denoted as $\sigma_{j}=\sigma[g(\boldsymbol{\lambda}_{l}\vert\boldsymbol{\lambda}_{l} \in \boldsymbol{\lambda}_{sample})]$. Note that $\boldsymbol{\lambda}_{sample}$ is different from $\boldsymbol{\lambda}_{eval}$, and it is a finite number of HPs that is randomly sampled from the full HP space before discretization. Next, we utilize a so-called \textit{acquisition function} $h(\cdot)$, which can make a trade-off between predicted performance and uncertainty, to select the most promising HP to evaluate. Particularly, we leverage Expected Improvement (EI) \citep{jones1998efficient} as the \textit{acquisition function} since it has shown prominent performances in many studies \citep{zhao2022towards}. Under the mild Gaussian assumption, the EI value of HP setting $\boldsymbol{\lambda}_{j}$ has the following closed-form expression:
\begin{equation}
EI(g(\boldsymbol{\lambda}_{j}))=\left[\phi(\hat{\eta}_{j})+\hat{\eta}_{j}\cdot\Phi(\hat{\eta}_{j})\right]\sigma_{j},
\end{equation}
where $\hat{\eta}_{j} = \frac{\eta_{j}-\eta_{eval}^{*}}{\sigma_{j}}$ if $\sigma_{j}>0$ and $\hat{\eta}_{j}=0$ otherwise. Moreover, $\phi(\cdot)$ and $\Phi(\cdot)$ denote the probability density function and the cumulative distribution function of standard Gaussian distribution, respectively. In addition, $\eta_{eval}^{*}$ is the highest prediction performance on $\boldsymbol{\lambda}_{eval}$ so far. For each iteration, the most promising HP can be obtained as follows:

\begin{equation}
\label{Equ:Promising}
    \boldsymbol{\lambda}^{*} = \underset{\boldsymbol{\lambda}_{j} \in \boldsymbol{\lambda}_{sample}}{\arg\max} h(g(\boldsymbol{\lambda}_{j})),
\end{equation}
where $g(\cdot)=g^{(current)}(\cdot)$ is the surrogate function in the current iteration, which can output the most promising HP $\boldsymbol{\lambda}^{*}$ to evaluate. On this basis, we apply $f(\boldsymbol{\lambda}^{*})$ on graph $\mathcal{G}$ to obtain a vector of anomaly scores $\boldsymbol{s}^{*}$, followed by inputting $\boldsymbol{s}^{*}$ into Equation~\ref{Equ:ImprovedCSM} to obtain the performance metric score $t^{*}$. At last, we update the evaluation HP set as $\boldsymbol{\lambda}_{eval} = \boldsymbol{\lambda}_{eval} \cup \boldsymbol{\lambda}^{*}$, and retrain $g(\cdot)$ with the updated  pairs $\{(\boldsymbol{\lambda}_{1},{t}_{1}(\mathcal{G})),(\boldsymbol{\lambda}_{2},{t}_{2}(\mathcal{G})), $..., $(\boldsymbol{\lambda}_{J},{t}_{J}(\mathcal{G}))\}...,(\boldsymbol{\lambda}^{*},t^{*})\}$. Additionally, we update $\eta_{eval}^{*}$ using the updated $\boldsymbol{\lambda}_{eval}$. 

\section{AutoGAD for Selecting Heterogeneous Anomaly Detectors}
\label{appendix:AutoGAD4ADSelection}
\begin{figure}
    \centering
    \includegraphics[width=1\linewidth]{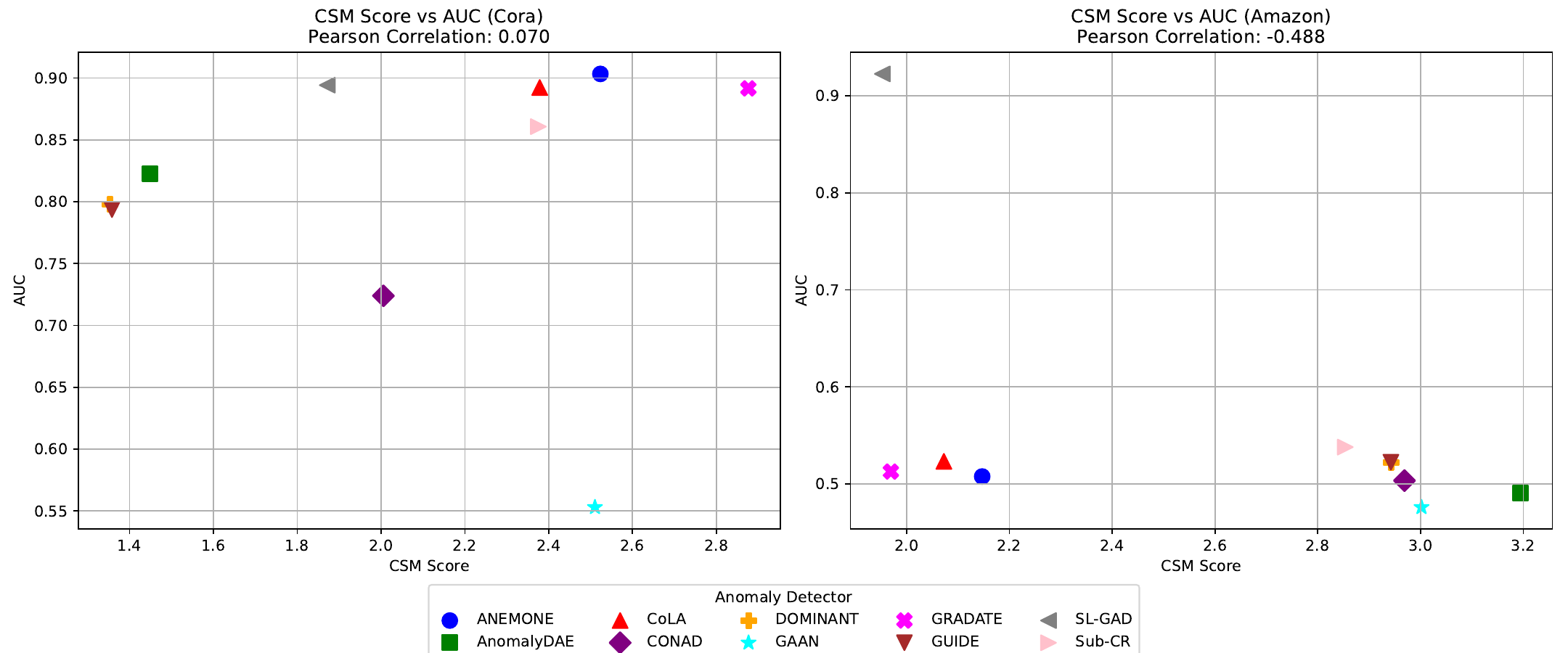}
    \caption{ {\color{black}Performance of AutoGAD in selecting heterogeneous anomaly detectors on selected datasets (results on other datasets are similar and thus omitted).}}
    \label{fig:ADSelection}
\end{figure}

{\color{black}
To evaluate the effectiveness of AutoGAD in selecting heterogeneous anomaly detectors, we compute the Pearson Correlation Coefficient between the highest improved CSM scores (based on Eq.~\ref{Equ:ImprovedCSM}) and the corresponding AUC scores for all anomaly detectors on  each individual dataset.

As shown in Figure~\ref{fig:ADSelection}, the results reveal that AutoGAD's CSM score does not effectively predict the true performance (AUC) of heterogeneous anomaly detectors. Specifically, on the Cora dataset, the Pearson correlation is very weak (0.070), indicating almost no relationship between the CSM score and AUC. On the Amazon dataset, the correlation is negative (-0.488), suggesting that higher CSM scores are, in fact, associated with lower AUC values in many cases. This weak or inverse correlation demonstrates that AutoGAD's scoring mechanism may not be suitable for selecting the best-performing anomaly detectors, as it fails to consistently align with true detector performance. Notably, detectors such as SL-GAD, which achieve high AUC, do not consistently receive high CSM scores, further underscoring the discrepancy. In summary, these findings suggest that AutoGAD's current approach to ranking anomaly detectors is unreliable and may require significant revisions to improve its predictive accuracy.}

\end{document}